\newcommand{\la}{\left\langle}
\newcommand{\ra}{\right\rangle}
\newcommand{\norm}[1]{\left\lVert#1\right\rVert} % norm: double vertical bars
\newcommand{\LL}{\mathcal{L}}
\newcommand{\E}{\mathbb{E}}
\newcommand{\PP}{{\mathbb P}}
\newcommand{\data}{{\rm d}}
\newcommand{\model}{{\rm g}}
\newtheorem{thm}{Theorem}
\begin{document}

%%%%%%%%% TITLE
\title{Adaptive Weighted Discriminator for Training Generative Adversarial Networks}

\author{Vasily Zadorozhnyy$^{1}$, Qiang Cheng$^{2,}$\thanks{Research supported in part by NSF OIA 2040665, NIH UH3 NS100606-05, and R01 HD101508-01 grants.}\:\,, Qiang Ye$^{1,}$\thanks{Research supported in part by NSF under grants DMS-1821144 and DMS-1620082.}\\
$^{1}$ Department of Mathematics\\
$^{2}$ Institute for Biomedical Informatics, Departments of Computer Science and Internal Medicine\\
University of Kentucky, Lexington, Kentucky 40506-0027\\
{\tt\small \{vasily.zadorozhnyy, Qiang.Cheng, qye3\}@uky.edu}
}

\maketitle
\thispagestyle{empty} 

%%%%%%%%% ABSTRACT
\begin{abstract}
    Generative adversarial network (GAN) has become one of the most important neural network models for classical unsupervised machine learning. A variety of discriminator loss functions have been developed to train GAN's discriminators and they all have a common structure: a sum of real and fake losses that only depends on the actual and generated data respectively. One challenge associated with an equally weighted sum of two losses is that the training may benefit one loss but harm the other, which we show causes instability and mode collapse. In this paper, we introduce a new family of discriminator loss functions that adopts a weighted sum of real and fake parts, which we call adaptive weighted loss functions or aw-loss functions. Using the gradients of the real and fake parts of the loss, we can adaptively choose weights to train a discriminator in the direction that benefits the GAN's stability. Our method can be potentially applied to any discriminator model with a loss that is a sum of the real and fake parts. Experiments validated the effectiveness of our loss functions on unconditional and conditional image generation tasks, improving the baseline results by a significant margin on CIFAR-10, STL-10, and CIFAR-100 datasets in Inception Scores (IS) and Fr\'echet Inception Distance (FID) metrics.
\end{abstract}

%%%%%%%%% BODY TEXT
\section{Introduction}\label{sect:intro}

Generative Adversarial Network (GAN)~\cite{NIPS2014_5423} has become one of the most important neural network models for unsupervised machine learning. The origin of this idea lies in the combination of two neural networks, one generative and one discriminative, that work simultaneously. The task of the generator is to generate data of a given distribution, while the discriminator's purpose is to try to recognize which data are created by the generative model and which are the original ones. While a variety of GAN models have been developed, many of them are prone to issues with training such as instability where model parameters might destabilize and not converge, mode collapse where the generative model produces a limited number of different samples, diminishing gradients where the generator gradient vanishes and training does not occur, and high sensitivity to hyperparameters.

In this paper, we focus on the discriminative model to rectify the issues of instability and mode collapse in training GAN. In the GAN architecture, the discriminator model takes samples from the original dataset and the output from the generator as input and tries to classify whether a particular element in those samples is {\em real} or {\em fake data}~\cite{NIPS2014_5423}. The discriminator updates its parameters by maximizing a discriminator loss function via backpropagation through the discriminator network. In many of the proposed models~\cite{NIPS2014_5423, WGAN-GP, ls17gan, lim2017geometric}, the discriminator loss function consists of two equally weighted parts: the ``real part'' that purely relies on the original dataset and the ``fake part'' that depends on the generator network and its output; for simplicity we will call them $\LL_r$ and $\LL_f$ for \emph{real} and \emph{fake} losses, respectively. For example, in the original GAN paper~\cite{NIPS2014_5423}, the discriminator loss function $\LL_D$ is written as

\begin{equation}
    \LL_D=\LL_r+\LL_f,\label{intro:eq:01}
\end{equation}
with $\LL_r=\E_{x\sim p_d}\big[\log D(x) \big]$ and $\LL_f=\E_{z\sim p_z}\big[\log (1-D(G(z))) \big]$, where $D$ and $G$ are the discriminative and generative models, respectively, $p_d$ is the probability distribution of the real data, and $p_z$ is the probability distribution of the generator parameter $z$. 

The goal of the GAN discriminator training is to increase both $\LL_r$ and $\LL_f$ so that the discriminator $D(\cdot)$ assigns high scores to real data and low scores to fake data. This is done in (\ref{intro:eq:01}) by placing equal weights on $\LL_r$ and $\LL_f$~\cite{NIPS2014_5423}. However, the training with $\LL_D$ is not performed equally on $\LL_r$ and $\LL_f$. Indeed, a gradient ascent training step along the $\nabla \LL_D$ may decrease $\LL_r$ (or $\LL_f$), depending on the angle between $\nabla \LL_D$ and $\nabla \LL_r$ (or $\nabla \LL_f$). For example, if we have a large obtuse angle between $\nabla \LL_r$ and $\nabla \LL_f$, which is the case in most training steps (see \S \ref{sec:s1}), training along the direction of $\nabla \LL_D$ may potentially decrease either $\LL_r$ or $\LL_f$ by going in the opposite direction to $\nabla \LL_r$ or $\nabla \LL_f$ (see \S \ref{sect:awGAN_section} and \S \ref{sec:s2}). We suggest that this reduction on the real loss may destabilize training and cause mode collapses. Specifically, if a generator is converging with its generated samples close to the data distribution (or a particular mode), a training step that increases the fake loss will reduce the discriminator scores on the fake data and, by the continuity of $D(\cdot)$, reduce the scores on the nearby real data as well. With the updated discriminator now assigning lower scores to the regions of data where the generator previously approximated well, the generator update is likely to move away from that region and to the regions with higher discriminator scores (possibly a different mode). Hence, we see instability or mode collapse. See \S \ref{sec:s3} for experimental results.

We propose a new approach in training the discriminative model by modifying the discriminator loss function and introducing adaptive weights in the following way,
\begin{equation}
    \LL_D^{aw}=w_r\cdot \LL_r + w_f\cdot \LL_f\label{intro:eq:02}.
\end{equation}

We adaptively choose $w_r$ and $w_f$ weights to calibrate the training in the real and fake losses. Using the information of $\nabla \LL_r$ and $\nabla \LL_f$, we can control the gradient direction, $\nabla \LL_D^{aw}$, by either training in the direction that benefits both $\LL_r$ and $\LL_f$ or increasing one loss while not changing the other. This attempts to avoid a situation where training may benefit one loss but significantly harm the other. A more detailed mathematical approach is presented in \S \ref{sect:awGAN_section}.

Our proposed method can be applied to any GAN model with a discriminator loss function composed of two parts as in (\ref{intro:eq:01}). For our experiments we have applied adaptive weights to the SN-GAN~\cite{miyato2018spectral}, AutoGAN~\cite{Gong_2019_ICCV}, and BigGAN~\cite{brock2018large} models for unconditional as well as conditional image generating tasks. We have achieved significant improvements on them for CIFAR-10, STL-10 and CIFAR-100 datasets in both Inception Scores (IS) and Fr\'echet Inception Distance (FID) metrics, see \S \ref{sect:experiments}. Our code is available at \href{https://github.com/vasily789/adaptive-weighted-gans}{https://github.com/vasily789/adaptive-weighted-gans}.

{\bf Notation: } We use $\la\cdot,\cdot\ra_2$ to denote the Euclidean inner product, $\norm{x}_2$ the Euclidean 2-norm, and $\angle_2(x,y):=\arccos\left(\frac{\la x,y \ra_2}{\norm{x}_2\norm{y}_2}\right)$ the angle between vectors $x$ and $y$.

\section{Related Work}\label{sect:related_work}
GAN was first proposed in~\cite{NIPS2014_5423} for creating generative models via simultaneous optimization of a discriminative and a generative model. The original GAN may suffer from vanishing gradients during training, non-convergence of the model(s), and mode collapse; see~\cite{che2016mode,NIPS2017_6779,Unrolled_GAN,radford2015unsupervised,IS_NIPS2016_6125} for discussions. Several papers~\cite{wgan_orig, WGAN-GP, lim2017geometric,ls17gan} have addressed the issues of vanishing gradients by introducing new loss functions. The LSGAN proposed in~\cite{ls17gan} adopted the least squares loss function for the discriminator that relies on minimizing the Pearson $\chi^2$ divergence, in contrast to the Jensen--Shannon divergence used in GAN. The WGAN model~\cite{wgan_orig, WGAN-GP} introduced another way to solve the problem of convergence and mode collapse by incorporating Wasserstein-1 distance into the loss function. As a result, WGAN has a loss function associated with image quality which improves learning stability and convergence. The hinge loss function introduced in~\cite{lim2017geometric, tran2017hierarchical} achieved smaller error rates than cross-entropy, being stable against different regularization techniques, and having a low computational cost~\cite{dong2019deeper}. The models in~\cite{binkowski2018demystifying,NIPS2017_6815,DBLP:journals/corr/abs-1708-08819} adopted a loss function called maximum mean discrepancy (MMD). A repulsive function to stabilize the MMD-GAN training was employed in~\cite{wang2018improving}, and the MMD loss function was weighted in~\cite{diesendruck2019importance} according to the contribution of data to the loss function. \cite{DBLP:journals/corr/abs-1709-03831} presented a dual discriminator GAN that combines two discriminators in a weighted sum. 

New loss functions are not the only way of improving GAN's framework. DCGAN~\cite{radford2015unsupervised}, one of the first and more significant improvements in the GAN architecture, was the incorporation of deep convolutional networks. The Progressive Growing GAN~\cite{karras2018progressive} was created based on~\cite{wgan_orig, WGAN-GP} with the main idea of progressively adding new layers of higher resolution during training, which helps to create highly realistic images. \cite{Gong_2019_ICCV, doveh2019degas, tian2020offpolicy} developed neural architecture search methods to find an optimal neural network architecture to train GAN for a particular task. 

There are many works dedicated to the conditional GAN, for example BigGAN~\cite{brock2018large} which utilized a model with a large number of parameters and larger batch sizes showing a significant benefit of scaling.

There are many works devoted to improving or analyzing GAN training. \cite{Unrolled_GAN} trained the generator by optimizing a loss function unrolled from several training iterations of the discriminator training. SN-GAN~\cite{miyato2018spectral}, normalized the spectral norm of each weight to stabilize the training. Recent work~\cite{sanyal2019stable} introduced stable rank normalization that simultaneously controls the Lipschitz constant and the stable rank of a layer. \cite{pmlr-v80-li18d} developed an analysis to suggest that  first-order approximations of the discriminator lead to instability and mode collapse. \cite{nagarajan2017gradient} proved local stability under the model that both the generator and the discriminator are updated simultaneously via gradient descent. \cite{Chu2020Smoothness} analyzed the stability of GANs through stationarity of the generator. \cite{Mescheder2018ICML} points out that absolute continuity is necessary for GAN training to converge. Relativistic GAN~\cite{jolicoeur-martineau2018} addressed the observation that with generator training increasing the probability that fake data is real, the probability of real data being real would decrease. \cite{Binkowski2019BatchWF} proposed a method of re-weighting training samples to correct for mass shift between the transferred distributions in the domain transfer setup. \cite{che2020gan} viewed GAN as an energy-based model and proposed an MCMC sampling based  method. 

\section{Adaptive Weighted Discriminator} \label{sect:awGAN_section}
In GAN training, if we maximize $\LL_D$ to convergence in training discriminator $D$, we should meet the goal to increase  both $\LL_r$ and $\LL_f$. However, in practice, 
we train with a gradient ascent step along $\nabla \LL_D =\nabla \LL_r+\nabla \LL_f$, which may be dominated by either $\nabla \LL_r$ or $\nabla \LL_f$. Then, the training may be done primarily on one of the losses, either $\LL_r$ or $\LL_f$. 
Consider a gradient ascent training iteration for $\LL_D$, 
\begin{equation}
    \theta_1 \longleftarrow \theta_0 + \lambda \nabla \LL_D,
    \label{eq:ga}
\end{equation}
where $\lambda$ is a learning rate. Then using the Taylor Theorem, we can expand both $\LL_r$ and $\LL_f$ about $\theta_0$, 

\begin{align}
    \LL_r(\theta_1)&=\LL_r(\theta_0) + \lambda \nabla \LL_r^T \nabla\LL_D+\mathcal{O}(\lambda^2)\\
    &= \LL_r(\theta_0)\nonumber\\
    &\quad +\lambda \norm{\nabla\LL_r}_2\norm{\nabla \LL_D}_2 \cos\left(\angle_2 \left(\nabla \LL_r, \nabla \LL_D\right)\right) \nonumber\\
    & \quad\quad + \mathcal{O}(\lambda^2) \label{awGAN:eq:01}
\end{align}
and
\begin{align}
    \LL_f(\theta_1)&= \LL_f(\theta_0)\nonumber\\
    &\quad +\lambda \norm{\nabla\LL_f}_2\norm{\nabla \LL_D}_2 \cos\left(\angle_2 \left(\nabla \LL_f, \nabla \LL_D\right)\right)\nonumber\\
    & \quad\quad + \mathcal{O}(\lambda^2), \label{awGAN:eq:02}
\end{align}
where we have omitted the evaluation point $\theta_0$ in all gradients (i.e. $\nabla \LL_*=\nabla\LL_*(\theta_0)$) to avoid cumbersome expressions. If one of $\angle_2 \left(\nabla \LL_r, \nabla \LL_D\right)$ and $\angle_2 \left(\nabla \LL_f, \nabla \LL_D\right)$ is obtuse, then to the first order approximation, the corresponding loss is decreased. This causes a decrease in the discriminator assigning a correct  score $D(\cdot)$ to the real (or fake) data. Thus, a gradient ascent step with loss (\ref{intro:eq:01}) may turn out to decrease one of the losses if the angle $\angle_2 \left(\nabla \LL_r, \nabla \LL_f\right) > 90^{\circ}$. This situation occurs quite often in GAN training; see \S \ref{sec:s1} for some experimental results illustrating this. 

This undesirable situation is expected to happen in GAN training when the generator has produced samples close to the data distribution or its certain modes. If a training step in the direction $\nabla \LL_D$ results in an increase in the fake loss or equivalently a decrease in  the discriminator scores $D(G(z))$ on the fake data, it will decrease the scores  $D(x)$ on the real data as well by the continuity of $D(\cdot)$. Equivalently, this reduces the real loss. With the updated discriminator assigning lower scores to the regions of the data where the generator previously approximated well, the generator update using the new discriminator will likely move in the direction where the discriminator scores are higher and hence leave the region it was converging to. We suggest that this is one of the causes of instability in GAN training. If the regions with high discriminator scores contain only a few modes of the data distribution, this leads to mode collapse; see the study in \S \ref{sec:s3}.

To remedy this situation, we propose to modify the training gradient $\nabla \LL_D$ to encourage high discriminator scores for real data. We propose a new family of discriminator loss functions, which we call adaptive weighted loss function or aw-loss function; see equation (\ref{intro:eq:02}). 

We first show that the proposed weighted discriminator (\ref{intro:eq:02}) with fixed weights 
carries the same theoretical properties of the original GAN as stated in~\cite{NIPS2014_5423, Mescheder2018ICML} for binary-cross-entropy loss function, i.e. when the min-max problem is solved exactly, we recover the data distribution. 

\begin{thm}\label{thm:weightgan} 
    Let $p_{\data} (x)$ and $p_{\model} (x)$ be the density functions for the data and model distributions, $\PP_\data$ and $\PP_\model$, respectively.  Consider $\LL^{aw} (D, p_\model)= w_r \E_{x\sim p_d}\big[\log D(x) \big]+ w_f\E_{x\sim p_g}\big[\log (1-D(x)) \big] $ with fixed $w_r, w_f >0$.
    \begin{enumerate}[leftmargin=*]
        \item Given a fixed $p_\model (x)$,  $\LL^{aw} (D, p_\model)$ is maximized by $D^* (x) = \frac{ w_r p_{\data} (x)}{w_r p_{\data} (x) +w_f p_{\model} (x)}$ for $x \in {\rm supp}(p_\data) \cup {\rm supp}(p_\model)$.
        \item $\min_{p_\model} \max_D \LL^{aw} (D, p_\model) = w_r\log \frac{w_r}{w_r+w_f}+w_f \log \frac{w_f}{w_r+w_f}$ with the minimum attained by $p_{\model} (x) = p_{\data} (x)$.
    \end{enumerate}
\end{thm}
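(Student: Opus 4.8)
The plan is to re-run the classical optimality argument of \cite{NIPS2014_5423} for the binary cross-entropy GAN, carefully carrying the weights $w_r,w_f$ through every constant.

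For part (1), I would rewrite the objective as a single integral,
\[
\LL^{aw}(D,p_\model) = \int \Big( w_r\, p_{\data}(x)\log D(x) + w_f\, p_{\model}(x)\log\big(1-D(x)\big)\Big)\,dx,
\]
and note that it is maximized by maximizing the integrand pointwise in the value $D(x)$. For fixed $x$, the map $t\mapsto a\log t + b\log(1-t)$ on $(0,1)$, with $a=w_r p_{\data}(x)\ge 0$ and $b=w_f p_{\model}(x)\ge 0$ not both zero, is strictly concave with unique maximizer $t^*=a/(a+b)$, which is precisely $D^*(x)$. The only subtlety is the set where one density vanishes: if $p_\model(x)=0$ the optimal value is the boundary point $1$, and the stated formula returns $w_r p_\data(x)/(w_r p_\data(x))=1$, so it remains valid on $\mathrm{supp}(p_\data)\cup\mathrm{supp}(p_\model)$ (off that set the integrand is identically $0$ and $D$ is irrelevant). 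Since this pointwise choice defines a measurable $[0,1]$-valued function, it is a genuine, feasible global maximizer.

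For part (2), I would substitute $D^*$ back in to get the optimal-discriminator objective
\[
C(p_\model) := \max_D \LL^{aw}(D,p_\model) = w_r\,\E_{x\sim p_\data}\!\Big[\log\tfrac{w_r p_\data(x)}{w_r p_\data(x)+w_f p_\model(x)}\Big] + w_f\,\E_{x\sim p_\model}\!\Big[\log\tfrac{w_f p_\model(x)}{w_r p_\data(x)+w_f p_\model(x)}\Big].
\]
The decisive algebraic step is to introduce the mixture density $M := \tfrac{w_r p_\data + w_f p_\model}{w_r+w_f}$ (which integrates to $1$), write $\tfrac{w_r p_\data}{w_r p_\data + w_f p_\model}=\tfrac{w_r}{w_r+w_f}\cdot\tfrac{p_\data}{M}$ and the analogous identity for the fake term, and split each logarithm. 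This gives
\[
C(p_\model) = w_r\log\tfrac{w_r}{w_r+w_f} + w_f\log\tfrac{w_f}{w_r+w_f} + w_r\,\mathrm{KL}(p_\data\,\|\,M) + w_f\,\mathrm{KL}(p_\model\,\|\,M),
\]
a weighted Jensen--Shannon-type divergence plus a constant. Nonnegativity of the two KL terms yields the claimed lower bound, with equality forcing $\mathrm{KL}(p_\data\|M)=\mathrm{KL}(p_\model\|M)=0$, i.e.\ $p_\data=M=p_\model$; and $p_\model=p_\data$ makes $M=p_\data$ and attains the bound, proving both the value of the min-max and that $p_\model=p_\data$ is the minimizer.

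I expect the main obstacle to be measure-theoretic bookkeeping rather than the algebra: justifying the interchange of the maximization with the integral (a pointwise/Fubini argument), handling rigorously the regions where $p_\data$ or $p_\model$ vanishes so that $D^*$ and the KL terms are well defined, and observing that the density hypotheses make $\mathrm{KL}(p_\data\|M)$ and $\mathrm{KL}(p_\model\|M)$ finite since $M$ dominates both $p_\data$ and $p_\model$. Once these points are handled, everything is a direct specialization of the unweighted proof with the constants tracked.
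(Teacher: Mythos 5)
Your proposal is correct and follows essentially the same route as the paper's own proof: pointwise maximization of $a\log t+b\log(1-t)$ for part (1), and for part (2) the decomposition of $\max_D \LL^{aw}$ into the constant $w_r\log\frac{w_r}{w_r+w_f}+w_f\log\frac{w_f}{w_r+w_f}$ plus $w_r\,\mathrm{KL}\bigl(p_\data \,\|\, \tfrac{w_r p_\data+w_f p_\model}{w_r+w_f}\bigr)+w_f\,\mathrm{KL}\bigl(p_\model \,\|\, \tfrac{w_r p_\data+w_f p_\model}{w_r+w_f}\bigr)$. Your extra attention to the vanishing-density boundary cases and to why the KL terms are finite is more careful than the paper's version but does not change the argument.
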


See \ref{Supp_Mater:A} for a proof of Theorem \ref{thm:weightgan}. To choose the weights $w_r$ and $w_f$, we propose an adaptive scheme, where the weights $w_r$ and $w_f$ are determined using gradient information of both $\LL_r$ and $\LL_f$. This structure allows us to adjust the direction of the gradient of the discriminator loss function to achieve the goal of training to increase both $\LL_r$ and $\LL_f$, or at least not to decrease either loss. We propose Algorithm \ref{alg:1} based on the following gradient relations with various weight choices.

\begin{thm}\label{thm:normalized}
    Consider $\LL_D^{aw}$ in (\ref{intro:eq:02}) and the gradient $\nabla \LL_D^{aw}$.
    \begin{enumerate}[leftmargin=*]
        \item If $w_r=\frac{1}{\norm{\nabla \LL_r}_2}$ and $w_f=\frac{1}{\norm{\nabla\LL_f}_2}$, then $\nabla \LL_D^{aw}$ is the angle bisector of $\nabla \LL_r$ and $ \nabla \LL_f$, i.e.
        \begin{align*}
            \angle_2 \left(\nabla \LL_D^{aw}, \nabla \LL_r\right)&=\angle_2 \left(\nabla \LL_D^{aw}, \nabla \LL_f\right)\\
            &=\angle_2 \left(\nabla \LL_r, \nabla \LL_f\right)/2.
        \end{align*}
        \item If $w_r=\frac{1}{\norm{\nabla \LL_r}_2}$ and $w_f=-\frac{\la\nabla\LL_r,\nabla\LL_f\ra_2}{\norm{\nabla \LL_f}_2^2\cdot\norm{\nabla \LL_r}_2}$, then
        \begin{equation*}
            \angle_2\left(\nabla \LL_D^{aw}, \nabla \LL_f\right)=90^{\circ},\; \angle_2\left( \nabla \LL_D^{aw}, \nabla \LL_r\right) \leq 90^{\circ}.
        \end{equation*}
        \item If $w_r=-\frac{\la\nabla\LL_r,\nabla\LL_f\ra_2}{\norm{\nabla \LL_r}_2^2\cdot\norm{\nabla \LL_f}_2}$ and $w_f=\frac{1}{\norm{\nabla\LL_f}_2}$, then
        \begin{equation*}
            \angle_2\left( \LL_D^{aw}, \nabla \LL_r\right)=90^{\circ},\; \angle_2\left( \nabla \LL_D^{aw}, \nabla \LL_f\right) \leq 90^{\circ}.
        \end{equation*}
    \end{enumerate}
\end{thm}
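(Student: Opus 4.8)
The plan is to handle all three parts by direct computation with the Euclidean inner product, exploiting that in each case $\nabla\LL_D^{aw}=w_r\nabla\LL_r+w_f\nabla\LL_f$ is an explicit linear combination whose coefficients are chosen precisely so that one of the relevant inner products collapses. Throughout I assume $\nabla\LL_r\neq 0$ and $\nabla\LL_f\neq 0$, so that the prescribed weights are well defined, and that $\nabla\LL_r$ and $\nabla\LL_f$ are neither parallel nor antiparallel, so that all the angles written below exist; these are the only exceptional configurations and can be flagged separately.

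For part 1, I would set $u=\nabla\LL_r/\norm{\nabla\LL_r}_2$ and $v=\nabla\LL_f/\norm{\nabla\LL_f}_2$, so that $\nabla\LL_D^{aw}=u+v$ with $\norm{u}_2=\norm{v}_2=1$. Writing $\theta=\angle_2(\nabla\LL_r,\nabla\LL_f)\in[0,\pi]$, we have $\la u,v\ra_2=\cos\theta$, hence $\norm{u+v}_2^2=2+2\cos\theta=4\cos^2(\theta/2)$ and $\la u+v,u\ra_2=1+\cos\theta=2\cos^2(\theta/2)$. Since $\cos(\theta/2)\ge 0$ on $[0,\pi]$, dividing yields $\cos\angle_2(\nabla\LL_D^{aw},\nabla\LL_r)=\cos(\theta/2)$, i.e. $\angle_2(\nabla\LL_D^{aw},\nabla\LL_r)=\theta/2$; the identical computation with $u$ and $v$ interchanged gives $\angle_2(\nabla\LL_D^{aw},\nabla\LL_f)=\theta/2$, which proves the assertion.

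For part 2, the key observation is that the chosen weights make $\nabla\LL_D^{aw}$ a positive multiple of the Gram--Schmidt orthogonalization of $\nabla\LL_r$ against $\nabla\LL_f$: since $w_f=-w_r\,\la\nabla\LL_r,\nabla\LL_f\ra_2/\norm{\nabla\LL_f}_2^2$ with $w_r=1/\norm{\nabla\LL_r}_2>0$, we get $\nabla\LL_D^{aw}=w_r\big(\nabla\LL_r-\frac{\la\nabla\LL_r,\nabla\LL_f\ra_2}{\norm{\nabla\LL_f}_2^2}\nabla\LL_f\big)$. Taking the inner product with $\nabla\LL_f$ gives $0$, which is the claimed right angle; taking the inner product with $\nabla\LL_r$ gives $w_r\big(\norm{\nabla\LL_r}_2^2-\la\nabla\LL_r,\nabla\LL_f\ra_2^2/\norm{\nabla\LL_f}_2^2\big)$, which is $\ge 0$ by the Cauchy--Schwarz inequality, so $\angle_2(\nabla\LL_D^{aw},\nabla\LL_r)\le 90^\circ$. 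Part 3 then follows from part 2 verbatim after interchanging the roles of $\LL_r$ and $\LL_f$.

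I do not anticipate any substantive obstacle: each part reduces to a one- or two-line inner-product identity together with Cauchy--Schwarz. The only point requiring care is bookkeeping the degenerate cases where the statement should not be applied — a vanishing gradient (the weights become undefined), or $\nabla\LL_r$ and $\nabla\LL_f$ antiparallel in part 1, respectively parallel in parts 2--3, each of which forces $\nabla\LL_D^{aw}=0$ so that the corresponding angle is undefined. These can simply be excluded in the statement or noted as trivial.
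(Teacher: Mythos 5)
Your proposal is correct and follows essentially the same route as the paper's own proof: part 1 via the half-angle identity applied to the sum of the two unit vectors, and parts 2--3 by direct inner-product computation with Cauchy--Schwarz (your Gram--Schmidt framing is just a cleaner way of saying the same thing). Your explicit flagging of the degenerate cases (vanishing or parallel/antiparallel gradients) is a small but worthwhile addition that the paper's proof leaves implicit.
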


See \ref{Supp_Mater:A} for a proof of Theorem \ref{thm:normalized}. The first case in the above theorem allows us to choose weights for (\ref{intro:eq:02}) such that we can train $\LL_r$ and $\LL_f$  by going in the direction of the angle bisector. However, sometimes the direction of the angle bisector might not be optimal. For example, if the angle between $\nabla \LL_r$ and $\nabla \LL_f$ is close to $180^{\circ}$, then the bisector direction will effectively not train either loss. During training, $\LL_f$ is often easier to train than $\LL_r$ meaning that the fake gradient has a larger magnitude. In this situation, we might want to train just on the real gradient direction by simply choosing $w_f=0$, but if the angle between $\nabla \LL_r$ and $\nabla \LL_f$ is obtuse, we will increase $\LL_r$ but significantly decrease $\LL_f$ which is undesirable. The second case in Theorem \ref{thm:normalized} suggests a direction that forms an acute angle with $\nabla \LL_r$ and orthogonal to $\nabla \LL_f$ (see Figure \ref{fig:case1}); such a direction will increase $\LL_r$ and to the first order approximation will leave $\LL_f$ unchanged. When $\LL_r$ is high, the third case in Theorem \ref{thm:normalized} would allow us to increase $\LL_f$ while minimizing changes to $\LL_r$.

Inspired by the Theorem \ref{thm:normalized} and observations that we have made, we can calibrate discriminator training in a way that produces and maintains high real loss to reduce fluctuations in the real loss (or real discriminator scores) to improve stability. Algorithm \ref{alg:1} describes the procedure for updating weights of the aw-loss function in (\ref{intro:eq:02}) during training using the information of $\nabla \LL_r$ and $\nabla \LL_f$.

\IncMargin{0.15em}\SetNlSty{text}{}{:}
\begin{algorithm}
    \SetAlgoLined
        \textbf{Given:} $\PP_\data$ and $\PP_\model$ - data and model distributions\;
        \textbf{Given:} $\alpha_1=0.5$, $\alpha_2=0.75$, $\varepsilon=0.05$, $\delta=0.05$\;
        \textbf{Sample:} $x_1, \ldots, x_n \sim \PP_\data$ and $y_1, \ldots, y_n \sim \PP_\model$\;
        \textbf{Compute:} $\nabla\LL_r$, $\nabla\LL_f$, $s_r=\frac{1}{n}\sum_{i=1}^n\sigma(D(x_i))$, $s_f=\frac{1}{n}\sum_{j=1}^n\sigma(D(y_j))$\;
        \uIf{$s_r < s_f-\delta$ or $s_r < \alpha_1$}{
            \uIf{$\angle_2\left(\nabla\LL_r,\nabla\LL_f\right) > 90^{\circ}$}{
            $w_r=\frac{1}{\norm{\nabla\LL_r}_2}+\varepsilon$; 
            $w_f=\frac{-\la\nabla\LL_r,\nabla\LL_f\ra_2}{\norm{\nabla \LL_f}_2^2\cdot\norm{\nabla \LL_r}_2}+\varepsilon$;\hspace{-0.1in}
            }
            \Else{
            $w_r=\frac{1}{\norm{\nabla\LL_r}_2}+\varepsilon$; $w_f=\varepsilon$\;
            }
        }
        \uElseIf{$s_r > s_f-\delta$ and $s_r > \alpha_2$}{
            \uIf{$\angle_2\left(\nabla\LL_r,\nabla\LL_f\right) > 90^{\circ}$}{
            $w_r=\frac{-\la\nabla\LL_r,\nabla\LL_f\ra_2}{\norm{\nabla \LL_r}_2^2\cdot\norm{\nabla \LL_f}_2}+\varepsilon$; $w_f=\frac{1}{\norm{\nabla\LL_f}_2}+\varepsilon$;\hspace{-0.1in}
            }
            \Else{
            $w_r=\varepsilon$; $w_f=\frac{1}{\norm{\nabla\LL_f}_2}+\varepsilon$\;
            }
        }
        \Else{
        $w_r=\frac{1}{\norm{\nabla\LL_r}_2}+\varepsilon$;  $w_f=\frac{1}{\norm{\nabla\LL_f}_2}+\varepsilon$\;
        }
    \caption{Adaptive weighted discriminator method for one step of discriminator training.}
    \label{alg:1}
\end{algorithm}

Algorithm \ref{alg:1} is designed to first avoid, up to the first order approximation, decreasing $\LL_r$ or  $\LL_f$ during a gradient ascent iteration. Furthermore, it chooses to favor training real 
%or fake loss by attempting to maintain the conditions 
loss unless the mean real score is greater than the mean fake score (i.e. $s_f \le s_r$) and the real mean score is at least $\alpha_1=0.5$ (i.e. $\alpha_1 \le s_r$). Here the mean discriminator scores $s_r$ and $s_f$ represent the mean probability that the discriminator assigns to $x_i$'s and  $y_j$'s respectively as real data. 
%The conditions represent a minimum requirement for the discriminator. When $s_f -\delta > s_r$ or $\alpha_1 \ge s_r$,  we favor training the real loss;
When $s_r$ is highly satisfactory with $ s_r \ge \alpha_2=0.75$ (the midpoint between the minimum probability 0.5 and the maximum probability 1 for correct classifications of real data), we favor training the fake loss; otherwise, we train both equally. By maintaining these training criteria, we will reduce the fluctuations in real and fake discriminator scores and hence avoid instability. See study in \S\ref{sec:s3}. Note that  we impose a small gap $\delta=0.05$ in $s_f-\delta > s_r$ to account for situations when $s_r$ is nearly identical to $s_f$. 

\begin{figure}[t]
    \begin{center}
        \includegraphics[width=0.215\textwidth]{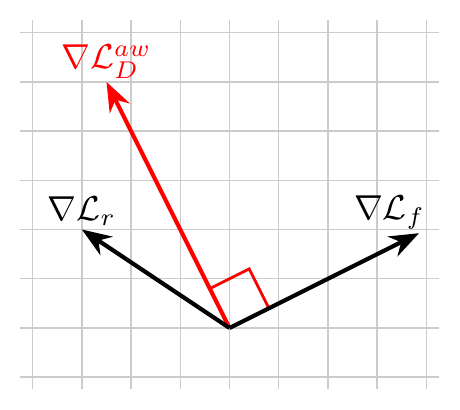}
    \end{center}
    \caption{Depiction of the second case of Theorem \ref{thm:normalized}.}
    \label{fig:case1}
\end{figure}

The way we favor training the real or fake loss depends on whether the angle between $\nabla \LL_r$ and $\nabla \LL_f$ is obtuse or not. In Algorithm \ref{alg:1}, the first and the third cases are concerned with the more frequent situation (see \S\ref{sec:s1} and Figure \ref{fig:angles_orig_vs_aw}) where the angle between $\nabla \LL_r$ and $\nabla \LL_f$ is obtuse. These cases are the ones that are developed in Theorem \ref{thm:normalized}. In the first case, we favor training real loss by going in the direction orthogonal to the $\nabla \LL_f$, illustrated in Figure \ref{fig:case1}. In the third case, we favor the fake loss by going in the direction orthogonal to $\nabla \LL_r$. In a similar manner, the second and the fourth cases are concerned with the situation when the angle between $\nabla \LL_r$ and $\nabla \LL_f$ is acute. We use the same criteria to decide if training should favor the real or fake directions, but in this case we favor training the real or fake loss by using the direction of the corresponding gradient. Lastly, in the fifth case, it is desirable to increase both $s_r$ and $s_f$ without either taking priority, so we choose to train in the direction of the angle bisector between $\nabla \LL_r$ and $\nabla \LL_f$. 

The two threshold  $\alpha_1$ and $\alpha_2$ in Algorithm \ref{alg:1}  can be treated as hyperparameters. Our ablation studies show that the default $\alpha_1=0.5$ and $\alpha_2=0.75$ as discussed earlier are indeed good choices, see \ref{Supp_Mater:B}.

\begin{figure*}
    \begin{center}
        \includegraphics[scale=.552]{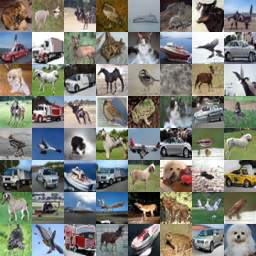}
        \includegraphics[scale=0.368]{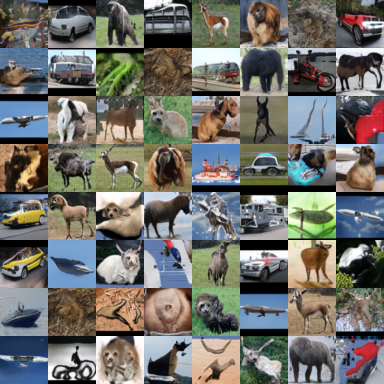}
        \includegraphics[scale=.552]{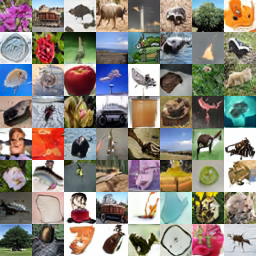}
    \end{center}
    \caption{aw-AutoGAN: CIFAR-10 (left), STL-10 (center), CIFAR-100 (right); samples randomly generated.}\label{fig:sample_images}
\end{figure*}

All weights stated in Algorithm \ref{alg:1} normalize both the real and fake gradients for the purpose of avoiding differently sized gradients, which has the effect of preventing exploding gradients and speeds up training, i.e. achieves better IS and FID with fewer epochs, see Figure \ref{fig:IS_FID_40epochs}. With this implementation, we implement a linear learning rate decay to ensure convergence. However, aw-method performs well without normalization, and achieves comparable results. We list the detailed results in \ref{Supp_Mater:B}.

A small constant $\varepsilon$ is added to all the weights in Algorithm \ref{alg:1} to avoid numerical discrepancies in cases that would prevent the discriminator model from training/updating. As an example, there are cases when our algorithm would set $w_r=0$ but at the same time $\nabla \LL_f$ would be almost zero, which will result in $\nabla \LL_D^{aw}$ being practically zero. We have set $\varepsilon=0.05$ in all of our experiments.

Algorithm \ref{alg:1} has a small computational overhead. At each iteration we compute inner products and norms that are used for computing $w_r$ and $w_f$, and then use these weight to update $\nabla \LL_D^{aw}$. If we have $k$ trainable parameters, then it takes an order of $6k$ operations to compute inner products between real--fake, real--real, and fake--fake gradients, and an order of $3k$ operations to form $\nabla \LL_D^{aw}$, totalling to an order of $9k$ operations in Algorithm \ref{alg:1}. This is a fraction of total computational complexity for one training iteration.

\section{Experiments \& Results}\label{sect:experiments}
\begin{figure*}
    \begin{center}
        \includegraphics[width=0.305\textwidth]{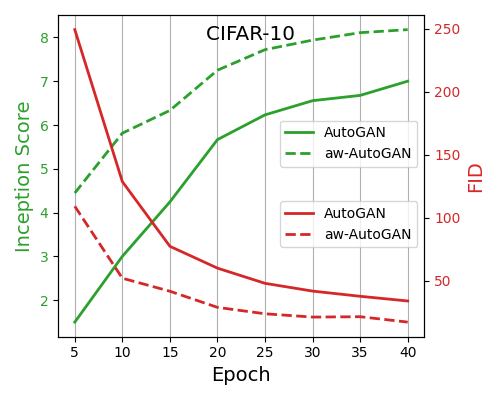}
        \includegraphics[width=0.305\textwidth]{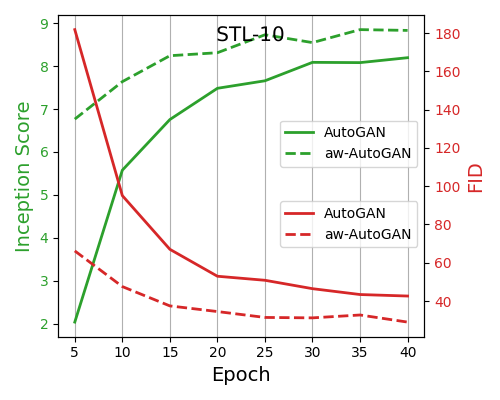}
        \includegraphics[width=0.305\textwidth]{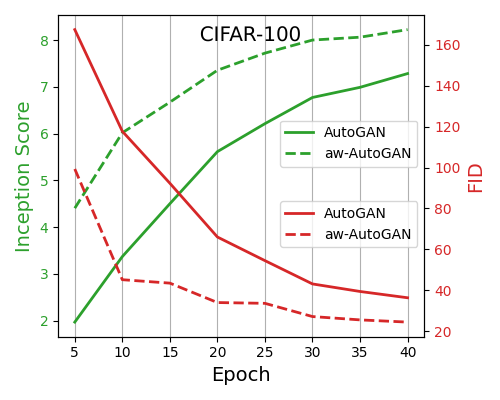}
    \end{center}
    \caption{AutoGAN vs aw-AutoGAN IS and FID plots for the first 40 epochs.}
    \label{fig:IS_FID_40epochs}
\end{figure*}

We implement our Adaptive Weighted Discriminator for SN-GAN~\cite{miyato2018spectral} and AutoGAN~\cite{Gong_2019_ICCV} models, and for SN-GAN~\cite{miyato2018spectral} and BigGAN~\cite{brock2018large} models, on unconditional and conditional image generating tasks, respectively (commonly referred to as unconditional and conditional GANs). AutoGAN is an architecture based on neural search. In our experiments; we do not invoke a neural search with our aw-loss, we have simply implemented the aw-method on the model and architecture exactly provided by~\cite{Gong_2019_ICCV}. 

We test our method on three datasets: \emph{CIFAR-10}~\cite{Krizhevsky09learningmultiple}, \emph{STL-10}~\cite{pmlr-v15-coates11a}, and \emph{CIFAR-100}~\cite{Krizhevsky09learningmultiple}. The datasets and implementation details are provided in \ref{Supp_Mater:B}.
We present the implementation with normalized gradients using linear learning rate decay. We also give results of non-normalized version without learning rate decay in \ref{Supp_Mater:B}.

All of the above mentioned models train the discriminator by minimizing the negative hinge loss ~\cite{lim2017geometric, tran2017hierarchical}. Our aw-loss also uses  the negative hinge loss as follows: 
\begin{align}
    \LL_D^{aw}=& - w_r\cdot\E_{x\sim p_d}\big[\min(0,D(x)-1)\big] \nonumber\\
    & \quad - w_f\cdot\E_{z\sim p_z}\big[\min(0,-1-D(G(z))\big],
\end{align}
with $w_r$ and $w_f$ updated every iteration using Algorithm \ref{alg:1}.

To evaluate the performance of the models, we employ the widely used Inception Score~\cite{IS_NIPS2016_6125} (IS) and Fr\'{e}chet Inception Distance~\cite{FID_NIPS2017_7240} (FID) metrics; see~\cite{GANs_equal} for more details. %Proposed in~\cite{IS_NIPS2016_6125}, IS is one of the most popular methods for evaluating GAN's performance. It uses a pre-trained ImageNet dataset V3 network model for image classification to classify the generated images. FID~\cite{FID_NIPS2017_7240} is another popular method for evaluating performance of GAN models. 
We compute these metrics every 5 epochs and we report the best IS and FID achieved by each model within the 320 (SN-GAN), 300 (AutoGAN), and BigGAN (1,000) training epochs as in the corresponding original works.

We first present the results  for the unconditional GAN  for the datasets CIFAR-10, STL-10 and CIFAR-100 in Tables \ref{cifar10_results}, \ref{stl10_results}, and \ref{cifar100_results}  respectively. In addition to baseline results, we have included top published results for each dataset for comparison purposes. %Results for the conditional image generating task for CIFAR-10 and CIFAR-100 datasets are presented in Table \ref{table:conditional_gan}.

\begin{table}[H]
    \begin{center}
        \begin{tabular}{l||c|c}
            \hline
            \rowcolor{lightgray}Method & IS $\uparrow$ & FID $\downarrow$ \\
            \hline
            %ProbGAN \cite{he2018probgan} & 7.75 & 24.6 \\
            %\hline
            Imp. MMD GAN \cite{wang2018improving} & 8.29 & 16.21 \\
            \hline
            MGAN \cite{hoang2018mgan} & 8.33{\scriptsize$\pm$.10} & 26.7 \\
            \hline
            %Dist-GAN \cite{Tran2018DistGAN} & - & 17.61 \\
            %\hline
            %Progressive GAN \cite{karras2018progressive} & 8.80$\pm$.05 & - \\
            %\hline
            %SS-GAN \cite{SSGAN} & - & 19.73 \\ 
            %\hline
            MSGAN \cite{MSGAN} & - & 11.40 \\
            \hline
            SRN-GAN \cite{sanyal2019stable} & 8.53{\scriptsize$\pm$.04} & 19.57 \\
            \hline
            StyleGAN2~\cite{Karras2019stylegan2} & \textbf{9.21{\scriptsize$\pm$.09}} & \textbf{8.32} \\
            \hline\hline
            SN-GAN \cite{miyato2018spectral} & 8.22{\scriptsize$\pm$.05} & 21.7 \\
            \hline
            aw-SN-GAN (Ours) & 8.53{\scriptsize$\pm$.11} & 12.32 \\
            \hline\hline
            AutoGAN \cite{Gong_2019_ICCV} & 8.55{\scriptsize$\pm$.10} & 12.42 \\
            \hline
            aw-AutoGAN (Ours) & 9.01{\scriptsize$\pm$.03} & 11.82 \\
        \end{tabular}
    \end{center}
    \caption{Unconditional GAN: CIFAR-10.}
    \label{cifar10_results}
\end{table}

For CIFAR-10 in Table \ref{cifar10_results}, our methods significantly improve the baseline results. Indeed, our aw-AutoGAN achieves IS substantially above all comparisons other than StyleGAN2. StyleGAN2 outperforms aw-AutoGAN but uses 26.2M parameters vs. 5.4M for aw-AutoGAN.

\begin{table}[H]
    \begin{center}
        \begin{tabular}{l||c|c}
            \hline
            \rowcolor{lightgray}Method & IS $\uparrow$ & FID $\downarrow$ \\
            \hline
            ProbGAN \cite{he2018probgan} & 8.87{\scriptsize$\pm$.09} & 46.74 \\
            \hline
            Imp. MMD GAN \cite{wang2018improving} & 9.34 & 37.63 \\
            \hline
            MGAN \cite{hoang2018mgan} & 9.22{\scriptsize$\pm$.11} & - \\
            \hline
            %Dist-GAN \cite{Tran2018DistGAN} & - & 36.19 \\
            %\hline
            MSGAN \cite{MSGAN} & - & 27.10 \\
            \hline\hline
            SN-GAN \cite{miyato2018spectral} & 9.10{\scriptsize$\pm$.04} & 40.10 \\
            \hline
            aw-SN-GAN (Ours) & \textbf{9.53{\scriptsize$\pm$.10}} & 36.41 \\
            \hline\hline
            AutoGAN \cite{Gong_2019_ICCV} & 9.16{\scriptsize$\pm$.12} & 31.01 \\
            \hline
            aw-AutoGAN (Ours) & 9.41{\scriptsize$\pm$.09} & \textbf{26.32} \\
        \end{tabular}
    \end{center}
    \caption{Unconditional GAN: STL-10.%: The aw-method substantially improves  SN-GAN/AutoGAN. IS \& FID scores.
    }
    \label{stl10_results}
\end{table}

For STL-10 in Table \ref{stl10_results}, our methods also significantly improve SN-GAN and AutoGAN baseline results. Our aw-SN-GAN achieved the highest IS and aw-AutoGAN achieved the lowest FID score among comparisons.

\begin{table}[H]
    \begin{center}
        \begin{tabular}{l||c|c}
            \hline
            \rowcolor{lightgray}Method & IS $\uparrow$ & FID $\downarrow$ \\
            \hline
            SS-GAN \cite{SSGAN} & - & 21.02$^{\dag}$\\ 
            \hline
            MSGAN \cite{MSGAN} & - & 19.74\\
            \hline
            SRN-GAN \cite{sanyal2019stable} & 8.85 & 19.55\\
            \hline\hline
            SN-GAN \cite{miyato2018spectral} & 8.18{\scriptsize$\pm$.12}$^{*}$ & 22.40$^{*}$\\
            \hline
            aw-SN-GAN (Ours) & 8.31{\scriptsize$\pm$.02} & 19.08\\
            \hline\hline
            AutoGAN \cite{Gong_2019_ICCV} & 8.54{\scriptsize$\pm$.10}$^{*}$ & 19.98$^{*}$\\
            \hline
            aw-AutoGAN (Ours) & \textbf{8.90{\scriptsize$\pm$.06}} & \textbf{19.00}\\
        \end{tabular}
    \end{center}
    \caption{Unconditional GAN: CIFAR-100; $^{*}$ - results from our test; $^{\dag}$ - quoted from \cite{MSGAN}.}
    \label{cifar100_results}
\end{table}

For CIFAR-100 in Table \ref{cifar100_results}, our methods improve IS significantly for AutoGAN but modestly for SN-GAN. Our aw-Auto-GAN achieved the highest IS and the lowest FID score among comparisons. 

We have also included some visual examples that were randomly generated by our aw-Auto-GAN model in Figure \ref{fig:sample_images}. We also consider the convergence of our method against training epochs by  plotting in Figure \ref{fig:IS_FID_40epochs} the IS and FID scores of 50,000 generated samples at every 5 epochs  for AutoGAN vs aw-AutoGAN. For all the datasets, our model consistently achieves faster convergence than the baseline. 

We next consider our aw-method for a class conditional image generating task using two base models, SN-GAN~\cite{miyato2018spectral} and BigGAN~\cite{brock2018large}, on CIFAR-10 and CIFAR-100 datasets. Results are listed in Table \ref{table:conditional_gan}.

\begin{table}[H]
    \begin{center}
        \begin{tabular}{l||c|c||c|c}
            & \multicolumn{2}{c||}{CIFAR-10} & \multicolumn{2}{c}{CIFAR-100}\\
            \hline
            \rowcolor{lightgray}Method & IS $\uparrow$ & FID $\downarrow$ & IS $\uparrow$ & FID $\downarrow$ \\
            \hline
            %FQGAN~\cite{pmlr-v119-zhao20d} & 8.48{\scriptsize$\pm$.03} & 5.59 & 9.59{\scriptsize$\pm$.04} & 7.42 \\
            cGAN~\cite{miyato2018cgans} & 8.62 & 17.5 & 9.04 & 23.2\\
            \hline
            %CRGAN\cite{Zhang2020Consistency} & - & 11.48 & - & - \\
            %\hline
            MHinge~\cite{Kavalerov_2021_WACV} & \textbf{9.58{\scriptsize$\pm$.09}} & 7.50 & \textbf{14.36{\scriptsize$\pm$.17}} & 17.3 \\
            \hline\hline
            SNGAN~\cite{miyato2018spectral} & 8.60{\scriptsize$\pm$.08} & 17.5 & 9.30$^{\dag}$ & 15.6$^{\dag}$\\
            \hline
            aw-SNGAN & 9.03{\scriptsize$\pm$.11} & 8.11 & 9.48{\scriptsize$\pm$.13} & 14.42\\
            \hline\hline
            BigGAN~\cite{brock2018large} & 9.22 & 14.73 & 10.99{\scriptsize$\pm$.14} & 11.73 \\
            \hline
            aw-BigGAN & 9.52{\scriptsize$\pm$.10} & \textbf{7.03} & 11.22{\scriptsize$\pm$.17} & \textbf{10.23} \\
        \end{tabular}
    \end{center}
    \caption{Conditional GAN: CIFAR-10 and CIFAR-100; $^{\dag}$ - quoted from \cite{shmelkov2018good}; BigGAN~\cite{brock2018large} CIFAR-100 results based on our test using the code from~\cite{BG_github}.}
    \label{table:conditional_gan}
\end{table}

Table \ref{table:conditional_gan} shows that our method works well for the conditional GAN too. The aw-method   significantly improves the SN-GAN and BigGAN baselines. Indeed, our aw-BigGAN achieved the best FID for both CIFAR-10 and CIFAR-100 among comparisons.

\section{Exploratory \& Ablation Studies}\label{sec:ablation_n_exploratory_studies}
In this section, we present three studies to illustrate potential problems of equally weighted GAN loss and advantages of our adaptive weighted loss. The hinge loss is implemented in the first and second studies, and a binary cross-entropy loss function is used for the third.

\subsection{Angles between Gradients} \label{sec:s1}
\begin{figure}[H]
    \begin{center}
        \includegraphics[width=0.4775\textwidth]{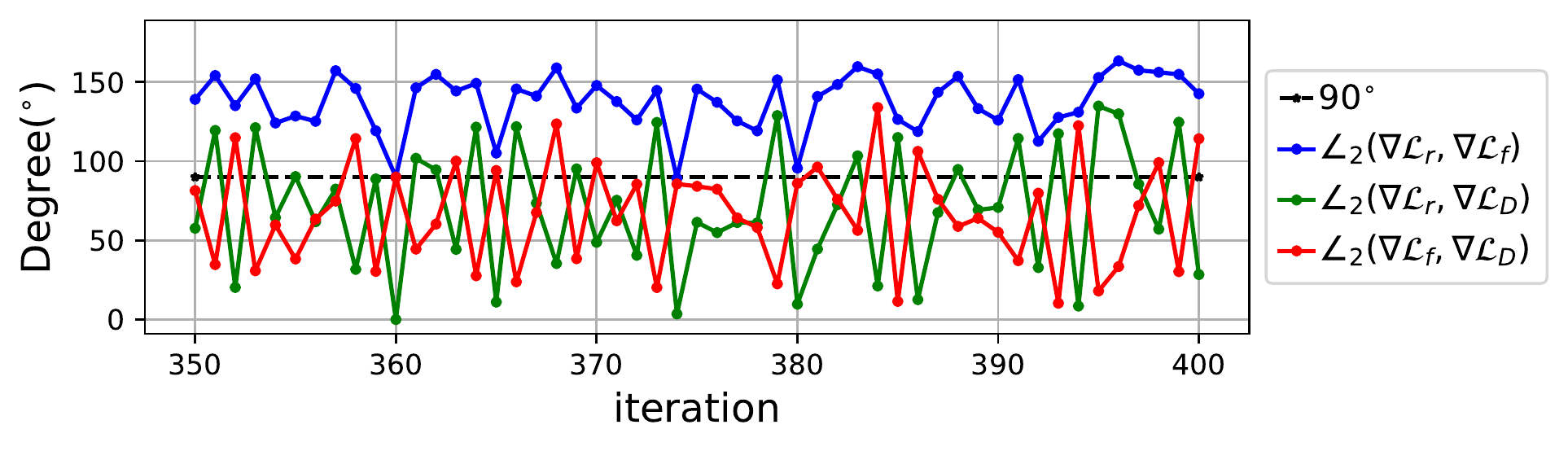}
        \includegraphics[width=0.4775\textwidth]{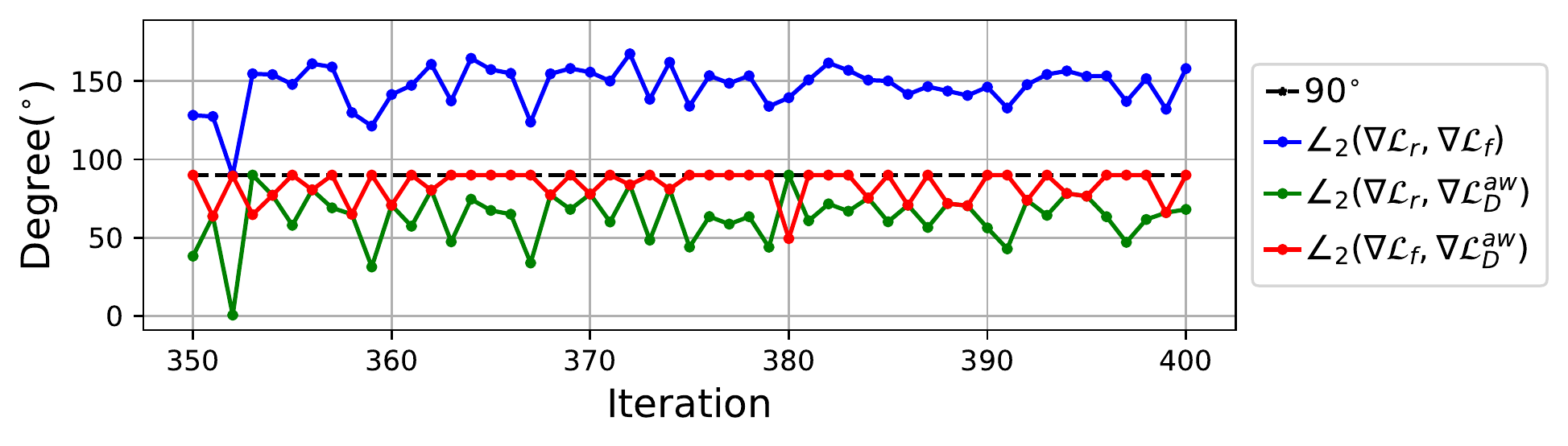}
    \end{center}
    \caption{Angles between gradients at each iteration. Top: original loss; Bottom: aw-loss.}
    \label{fig:angles_orig_vs_aw}
\end{figure}

In the first study, we examine the angles between $\nabla \LL_r$, $\nabla \LL_f$, $\nabla \LL_D$ (or $\nabla \LL_D^{aw}$). We use the CIFAR-10 dataset with the DCGAN architecture \cite{radford2015unsupervised} and we look at 50 iterations in the first epoch of training. In Figure \ref{fig:angles_orig_vs_aw}, we plot the following 3 angles: $\angle_2(\nabla \LL_r, \nabla \LL_f)$, $\angle_2(\nabla \LL_r, \nabla \LL_D)$ and $\angle_2(\nabla \LL_f, \nabla \LL_D)$ against iterations for the original loss $\LL_D$ (\ref{intro:eq:01}) on the top and for the aw-loss $\LL_D^{aw}$ on the bottom. For the original loss (Left), $\angle_2(\nabla \LL_r, \nabla \LL_f)$ (blue) stays greater then $90^{\circ}$, closer to $180^{\circ}$. $\angle_2(\nabla \LL_r, \nabla \LL_D)$ (green) often goes above $90^{\circ}$ and so the training is often done to decrease the real loss. $\angle_2(\nabla \LL_f, \nabla \LL_D)$ also goes above $90^{\circ}$, though to a lesser extent. With the aw-loss (Right), $\angle_2(\nabla \LL_r, \nabla \LL_D^{aw})$ and $\angle_2(\nabla \LL_f, \nabla \LL_D^{aw})$ stay below the $90^{\circ}$ line and indicate that we train in the direction of $\nabla \LL_r$ and orthogonal to $\nabla \LL_f$ in most steps.

\subsection{Real Discriminator Scores and Real-Fake Gap after Training} \label{sec:s2}
\begin{figure}[t]
    \begin{center}
        \includegraphics[width=0.45\textwidth]{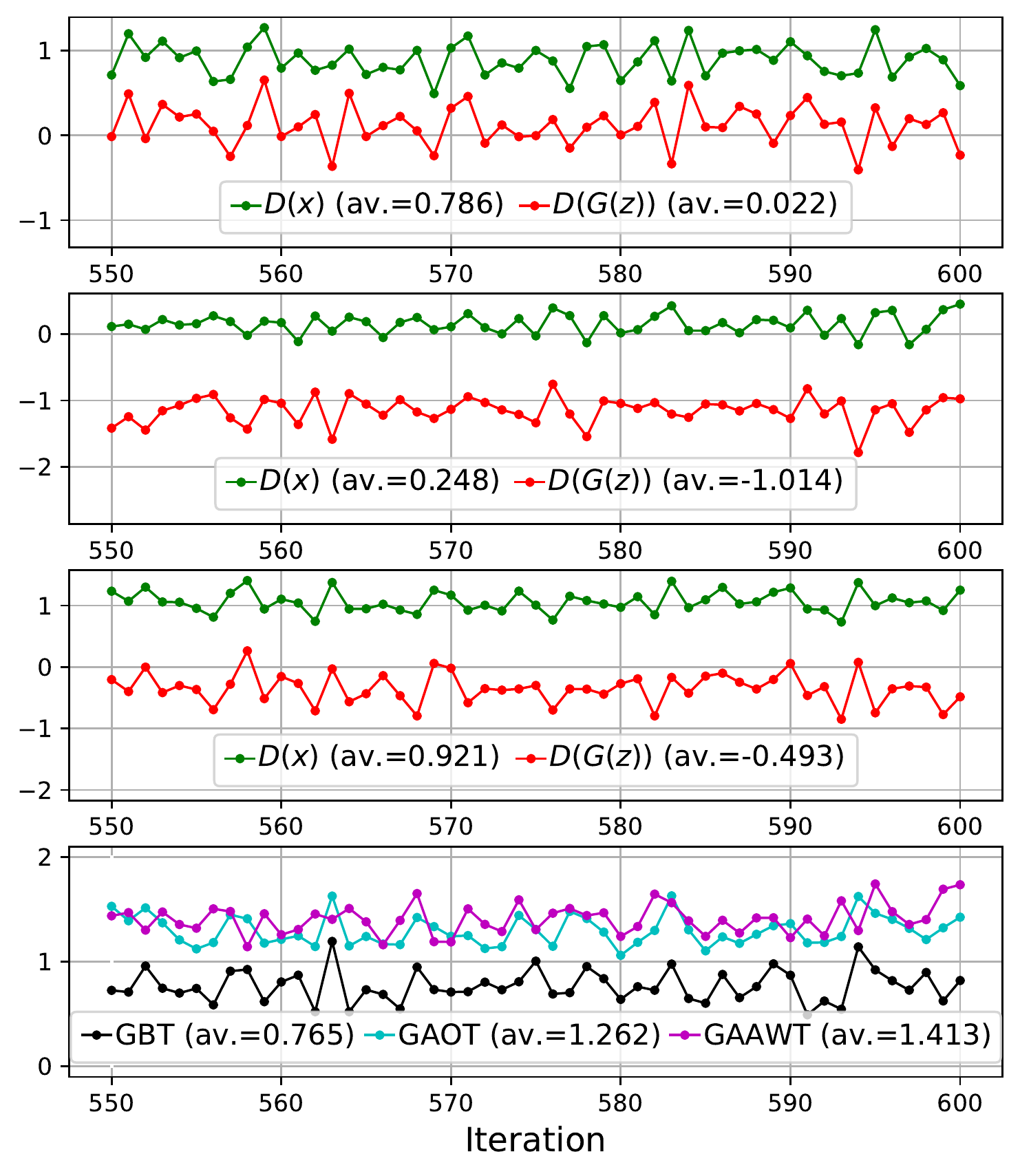}
    \end{center}
    \caption{Mean discriminator scores for real data $D(x)$ and fake data $D(G(z))$ (Row 1: before training, Row 2: after original training with $\LL_D$, Row 3: after training with aw-loss  $\LL_D^{aw}$) and their gap (Row 4: GBT - gap before training; GAOT - gap after original training; GAAWT - gap after aw-loss training)}
    \label{fig:ablation_study}
\end{figure}

\begin{figure*}[t]
    \includegraphics[width=\textwidth]{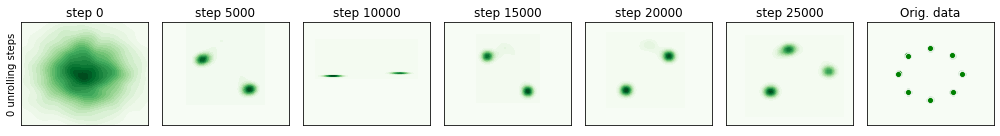}
    
    \includegraphics[width=.8475\textwidth]{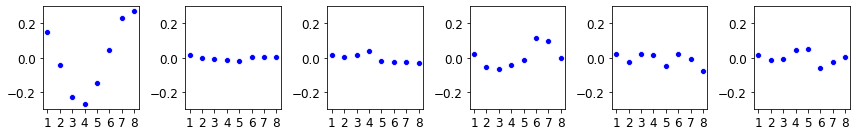}
    
    \includegraphics[width=\textwidth]{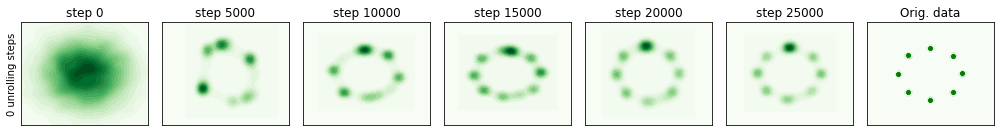}
    
    \includegraphics[width=.8575\textwidth]{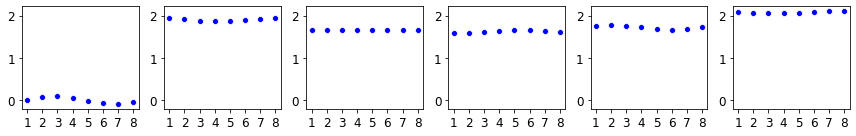}
    \caption{Mixture of eight 2D Gaussian distributions centered at 8 points (right-most column). Row 1: GAN sample points produced by generators; Row 2: GAN mean discriminator scores for each of 8 classes; 
    Row 3: aw-GAN sample points produced by generators; Row 4: aw-GAN mean discriminator scores for each of eight classes;}
    \label{fig:mode_collapse_study}
\end{figure*}

Our second experiment is an ablation study to show that aw-loss increases the discriminator scores for real data and increases the gap between real and fake discriminator scores. We again apply the DCGAN model with the original loss $\LL_D$ to CIFAR-10 and at every iteration we examine the mean discriminator score for the mini-batch of the real set and the mean discriminator scores for the mini-batch of the fake dataset generated by the generator. We use the logit output of the discriminator network as the score. We plot these two mean scores against each iteration before training in the first row of Figure \ref{fig:ablation_study} and after training (with the original loss $\LL_D$) in the second row. At each of the above training iterations, we replace $\LL_D$ by the aw-loss $\LL_D^{aw}$ (\ref{intro:eq:02}) and train for one iteration with the same training mini-batch. We plot the mean discriminator scores for the mini-batches of the real and fake dataset after this training in the third row of Figure \ref{fig:ablation_study}. We further present the gaps between the two scores before training and after training using the original loss and using aw-loss in the fourth row of Figure \ref{fig:ablation_study}.

Figure \ref{fig:ablation_study} shows that training with aw-loss leads to higher real discriminator scores (0.921 epoch average) than training with the original loss (0.248 epoch average). The average gap between real and fake scores is also larger with the aw-loss at 1.413 vs. 1.262 of the original loss. Therefore, with the same model and the same training mini-batch, the aw-loss produces higher discriminator scores for the real dataset and larger gaps between real and fake scores. These are two important properties of a discriminator for the generator training. 

\subsection{Instability and Real Discriminator Scores} \label{sec:s3}

Our third study examines benefits of high discriminator scores for a real dataset with respect to instability and mode collapse of GAN training. We use a synthetic dataset with a mixture of Gaussian distributions used to test unrolled GAN in \cite{Unrolled_GAN}. The dataset consists of eight 2D Gaussian distributions centered at eight equally distanced points on the unit circle. We train with a plain GAN as in \cite{Unrolled_GAN} and we plot samples of (fake) data generated by the generator every 5,000 iterations on the first row of of Figure \ref{fig:mode_collapse_study}. We see that the generated data converging to two or three points but then moving off, demonstrating instability and mode collapse. To understand this phenomenon, at each of the iterations that we study in Figure \ref{fig:mode_collapse_study}, we generate 100 (real) data points from each of the eight classes and compute their mean discriminator scores (as the logit output of the discriminator). We plot the mean scores against the classes in the second row of Figure \ref{fig:mode_collapse_study}. We observe that the discriminator scores for the real data do not increase much during training, staying around 0, which corresponds to 0.5 probability after the logistic sigmoid function. The scores are also uneven among different classes. We believe these cause the instability in the generator training.

We compare the GAN results with aw-GAN that applies our adaptive weighted discriminator to the plain GAN and we present the corresponding plots of generated data points (fake) in the third row of Figure \ref{fig:mode_collapse_study} and the corresponding discriminator scores on the eight classes in the bottom row. In this case, the generator gradually converges to all eight classes and the discriminator scores stay high for all eight classes. Even though the generator was starting to converge to a few classes (step 5,000), the discriminator scores remain high for all classes. Then the generator continues to converge while convergence to other classes occurs. We believe the high real discriminator scores maintains stability and prevents mode collapse in this case. 

\section*{Conclusions}
This paper pinpoints potential negative effects of the traditional GAN training on the real loss (and fake loss) and points out that this is a potential cause of instability and mode collapse. To remedy these issues, we have proposed the Adaptive Weighted Discriminator method to increase and maintain high real loss. Our experiments demonstrate the benefits and the competitiveness of this method.

\section*{Acknowledgements}
We would like to thank Devin Willmott and Xin Xing for their initial efforts and helpful advice. We thank Rebecca Calvert for reading the manuscript and providing us with many valuable comments/suggestions. We also thank the University of Kentucky Center for Computational Sciences and Information Technology Services Research Computing for their support and use of the Lipscomb Compute Cluster.

{\small
\bibliographystyle{ieee_fullname}
\bibliography{egbib.bib}
}
\newpage

\onecolumn
\appendix
\addcontentsline{toc}{section}{Supplementary Materials}
\renewcommand{\thesubsection}{Appendix \Alph{subsection}}
\section*{Supplementary Materials}
\subsection{}\label{Supp_Mater:A}
\renewcommand{\thesubsubsection}{\Alph{subsection}.\arabic{subsubsection}}
\subsubsection{Proof of Theorem \ref{thm:weightgan}}
\setcounter{thm}{0}
\begin{thm}
    Let $p_{\data} (x)$ and $p_{\model} (x)$ be the density functions for the data and model distributions, $\PP_\data$ and $\PP_\model$, respectively.  Consider $\LL^{aw} (D, p_\model)= w_r \E_{x\sim p_d}\big[\log D(x) \big]+ w_f\E_{x\sim p_g}\big[\log (1-D(x)) \big] $ with fixed $w_r, w_f >0$.
    \begin{enumerate}[leftmargin=*]
        \item Given a fixed $p_\model (x)$,  $\LL^{aw} (D, p_\model)$ is maximized by $D^* (x) = \frac{ w_r p_{\data} (x)}{w_r p_{\data} (x) +w_f p_{\model} (x)}$ for $x \in {\rm supp}(p_\data) \cup {\rm supp}(p_\model)$.
        \item $\min_{p_\model} \max_D \LL^{aw} (D, p_\model) = w_r\log \frac{w_r}{w_r+w_f}+w_f \log \frac{w_f}{w_r+w_f}$ with the minimum attained by $p_{\model} (x) = p_{\data} (x)$.
    \end{enumerate}
\end{thm}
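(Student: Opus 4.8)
The plan is to follow the classical Goodfellow et al.\ optimality argument, generalized to carry the two weights. For part~1, fix $p_\model$ and optimize $D$ pointwise. Writing $\LL^{aw}(D,p_\model)=\int\big(w_r p_\data(x)\log D(x)+w_f p_\model(x)\log(1-D(x))\big)\,dx$, observe that for each fixed $x$ the integrand has the form $a\log y+b\log(1-y)$ with $a=w_r p_\data(x)\ge 0$, $b=w_f p_\model(x)\ge 0$, and on ${\rm supp}(p_\data)\cup{\rm supp}(p_\model)$ at least one of $a,b$ is strictly positive. When both are positive this function is strictly concave on $(0,1)$ with first-order condition $a(1-y)=by$, hence unique maximizer $y=a/(a+b)$; in the degenerate cases $b=0$ or $a=0$ the maximizer lies at $y=1$ or $y=0$, which is still the value $a/(a+b)$. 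This gives $D^*(x)=\frac{w_r p_\data(x)}{w_r p_\data(x)+w_f p_\model(x)}$, and since the integrand was maximized pointwise, $D^*$ maximizes the integral.

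For part~2, substitute $D^*$ back in to get $C(p_\model):=\max_D\LL^{aw}(D,p_\model)$ and recognize a weighted Jensen--Shannon-type decomposition. Introduce the mixture density $\bar p:=\frac{w_r p_\data+w_f p_\model}{w_r+w_f}$, which is a genuine probability density since it is a convex combination. Factoring $\frac{w_r p_\data}{w_r p_\data+w_f p_\model}=\frac{w_r}{w_r+w_f}\cdot\frac{p_\data}{\bar p}$ and the analogous identity for the fake term, a short computation yields
\begin{align*}
    C(p_\model) &= w_r\log\tfrac{w_r}{w_r+w_f}+w_f\log\tfrac{w_f}{w_r+w_f} \\
    &\quad + w_r\,{\rm KL}(p_\data\,\|\,\bar p)+w_f\,{\rm KL}(p_\model\,\|\,\bar p).
\end{align*}
Since $w_r,w_f>0$ and KL divergences are nonnegative, $C(p_\model)$ is bounded below by the claimed constant, with equality iff both KL terms vanish, i.e.\ $p_\data=\bar p=p_\model$ almost everywhere; unravelling $p_\data=\bar p$ forces $(w_r+w_f)p_\data=w_r p_\data+w_f p_\model$, hence $p_\data=p_\model$. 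Thus $\min_{p_\model}\max_D\LL^{aw}=w_r\log\frac{w_r}{w_r+w_f}+w_f\log\frac{w_f}{w_r+w_f}$, attained exactly at $p_\model=p_\data$.

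The computation is routine; the only point requiring a little care is the treatment of supports in part~1, where on the region where exactly one of $p_\data,p_\model$ vanishes the pointwise maximizer sits on the boundary of $(0,1)$. One handles this either by a limiting argument or simply by restricting attention to ${\rm supp}(p_\data)\cup{\rm supp}(p_\model)$ as in the statement and checking the formula stays valid there. No genuine obstacle is expected.
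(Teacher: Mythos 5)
Your proposal is correct and follows essentially the same route as the paper's own proof: pointwise maximization of $a\log t+b\log(1-t)$ for part~1, and the decomposition of $\max_D\LL^{aw}$ into the constant $w_r\log\frac{w_r}{w_r+w_f}+w_f\log\frac{w_f}{w_r+w_f}$ plus the weighted KL terms against the mixture $\frac{w_r p_\data+w_f p_\model}{w_r+w_f}$ for part~2. Your added care about the boundary cases on the supports and the explicit check that the mixture is a genuine density are minor refinements the paper leaves implicit, but the argument is the same.
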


\begin{proof}
    \hfill
    \begin{enumerate}[leftmargin=*]
        \item First, the function $f(t)=a\log t+b \log (1-t)$ has its maximum in $[0,1]$ at $t=\frac{a}{a+b}$. 
        Given a fixed $p_\model(x)$, $w_r>0$ and $w_f>0$.
        \begin{align}
            \LL^{aw} (D, p_\model)&=w_r\E_{x\sim p_\data} \left[\log \left(D(x)\right)\right] + w_f\E_{x\sim p_\model} \left[\log \left(1-D(x)\right)\right]\\
            &=\int_x w_rp_\data(x)\log\left(D(x)\right)+w_f p_\model(x)\log \left(1-D(x)\right)dx \\
            &\le \int_x w_rp_\data(x)\log\left(D^*(x)\right)+w_f p_\model(x)\log \left(1-D^*(x)\right)dx \\
             &= w_r\E_{x\sim p_\data} \left[\log \left(\frac{w_rp_\data(x)}{w_rp_\data(x)+w_fp_\model(x)}\right)\right] + w_f\E_{x\sim p_\model} \left[\log\left(\frac{w_fp_\model(x)}{w_rp_\data(x)+w_fp_\model(x)}\right)\right]. 
        \end{align}
        where the equality holds if $D(x)=D^*(x)$. Therefore, $ \LL^{aw} (D, p_\model)$ is maximum when $D=D^*$. 
        
        \item If $p_\model(x)=p_\data(x)$, then $D^*(x)=\frac{w_r}{w_r+w_f}$ and
        \begin{align}
            \max_D \LL^{aw} (D, p_\model)&= w_r\E_{x\sim p_\data}\left[\log\left(\frac{w_r}{w_r+w_f}\right)\right]+w_f\E_{x\sim p_\model}\left[\log\left(\frac{w_f}{w_r+w_f}\right)\right]\\
            &=w_r\log\left(\frac{w_r}{w_r+w_f}\right)+w_f\log\left(\frac{w_f}{w_r+w_f}\right).
        \end{align}
        
        On the other hand,
        \begin{align}
             \max_D \LL^{aw} (D, p_\model)&= w_r\E_{x\sim p_\data} \left[\log \left(\frac{w_rp_\data(x)}{w_rp_\data(x)+w_fp_\model(x)}\right)\right] + w_f\E_{x\sim p_\model} \log\left[ \left(\frac{w_fp_\model(x)}{w_rp_\data(x)+w_fp_\model(x)}\right)\right]\\
            &=w_r\log\left(\frac{w_r}{w_r+w_f}\right)+w_f\log\left(\frac{w_f}{w_r+w_f}\right)\nonumber\\
            &\quad +w_rKL\left(p_\data \bigg| \frac{w_rp_\data+w_fp_\model}{w_r+w_f} \right)+w_fKL\left(p_\model \bigg| \frac{w_rp_\data+w_fp_\model}{w_r+w_f} \right)\\
            &\geq w_r\log\left(\frac{w_r}{w_r+w_f}\right)+w_f\log\left(\frac{w_f}{w_r+w_f}\right),
        \end{align}
        where KL is the Kullback-Leibler divergence and equality holds when $p_\data=\frac{w_rp_\data+w_fp_\model}{w_r+w_f}$ and $p_\model=\frac{w_rp_\data+w_fp_\model}{w_r+w_f}$. Thus we have shown that
        \begin{align}
            \min_{p_\model}\max_D \LL^{aw} (D, p_\model)= w_r\log\left(\frac{w_r}{w_r+w_f}\right)+w_f\log\left(\frac{w_f}{w_r+w_f}\right).
        \end{align}
        and minimum is attained when $p_\model = p_\data$.
    \end{enumerate}
\end{proof}

\subsubsection{Proof of Theorem \ref{thm:normalized}}
\begin{thm}
    Consider $\LL_D^{aw}$ in (\ref{intro:eq:02}) and the gradient $\nabla \LL_D^{aw}$.
    \begin{enumerate}[leftmargin=*]
        \item If $w_r=\frac{1}{\norm{\nabla \LL_r}_2}$ and $w_f=\frac{1}{\norm{\nabla\LL_f}_2}$, then $\nabla \LL_D^{aw}$ is the angle bisector of $\nabla \LL_r$ and $ \nabla \LL_f$, i.e.
        \begin{align}
            \angle_2 \left(\nabla \LL_D^{aw}, \nabla \LL_r\right)&=\angle_2 \left(\nabla \LL_D^{aw}, \nabla \LL_f\right)=\angle_2 \left(\nabla \LL_r, \nabla \LL_f\right)/2.
        \end{align}
        \item If $w_r=\frac{1}{\norm{\nabla \LL_r}_2}$ and $w_f=-\frac{\la\nabla\LL_r,\nabla\LL_f\ra_2}{\norm{\nabla \LL_f}_2^2\cdot\norm{\nabla \LL_r}_2}$, then
        \begin{equation}
            \angle_2\left(\nabla \LL_D^{aw}, \nabla \LL_f\right)=90^{\circ},\; \angle_2\left( \nabla \LL_D^{aw}, \nabla \LL_r\right) \leq 90^{\circ}.
        \end{equation}
        \item If $w_r=-\frac{\la\nabla\LL_r,\nabla\LL_f\ra_2}{\norm{\nabla \LL_r}_2^2\cdot\norm{\nabla \LL_f}_2}$ and $w_f=\frac{1}{\norm{\nabla\LL_f}_2}$, then
        \begin{equation}
            \angle_2\left( \LL_D^{aw}, \nabla \LL_r\right)=90^{\circ},\; \angle_2\left( \nabla \LL_D^{aw}, \nabla \LL_f\right) \leq 90^{\circ}.
        \end{equation}
    \end{enumerate}
\end{thm}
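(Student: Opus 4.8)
The plan is to handle all three cases by direct computation of the Euclidean inner products of $\nabla\LL_D^{aw}=w_r\nabla\LL_r+w_f\nabla\LL_f$ against $\nabla\LL_r$ and $\nabla\LL_f$, throughout assuming $\nabla\LL_r$ and $\nabla\LL_f$ are nonzero (so that the weights are well defined) and, in Cases~2 and~3, that $\nabla\LL_D^{aw}\neq0$ so the angles are defined (this fails only in the degenerate case where the two gradients are anti-parallel).

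For Case~1, substituting $w_r=1/\norm{\nabla\LL_r}_2$ and $w_f=1/\norm{\nabla\LL_f}_2$ gives $\nabla\LL_D^{aw}=\hat u_r+\hat u_f$, where $\hat u_r=\nabla\LL_r/\norm{\nabla\LL_r}_2$ and $\hat u_f=\nabla\LL_f/\norm{\nabla\LL_f}_2$ are unit vectors. Then $\la\hat u_r+\hat u_f,\hat u_r\ra_2=1+\la\hat u_r,\hat u_f\ra_2=\la\hat u_r+\hat u_f,\hat u_f\ra_2$, and since $\norm{\hat u_r}_2=\norm{\hat u_f}_2=1$ and $\norm{\hat u_r+\hat u_f}_2$ is common to both cosines, $\angle_2(\nabla\LL_D^{aw},\nabla\LL_r)=\angle_2(\nabla\LL_D^{aw},\nabla\LL_f)$. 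Because $\nabla\LL_D^{aw}$ lies in the plane spanned by $\nabla\LL_r$ and $\nabla\LL_f$ and in the convex cone they generate (both coefficients nonnegative), it lies ``between'' the two gradients, so $\angle_2(\nabla\LL_D^{aw},\nabla\LL_r)+\angle_2(\nabla\LL_D^{aw},\nabla\LL_f)=\angle_2(\nabla\LL_r,\nabla\LL_f)$; combined with the equality just shown, each equals $\angle_2(\nabla\LL_r,\nabla\LL_f)/2$, so $\nabla\LL_D^{aw}$ is the bisector.

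For Case~2, a direct computation gives $\la\nabla\LL_D^{aw},\nabla\LL_f\ra_2 = w_r\la\nabla\LL_r,\nabla\LL_f\ra_2 + w_f\norm{\nabla\LL_f}_2^2 = \frac{\la\nabla\LL_r,\nabla\LL_f\ra_2}{\norm{\nabla\LL_r}_2} - \frac{\la\nabla\LL_r,\nabla\LL_f\ra_2}{\norm{\nabla\LL_r}_2} = 0$, so $\angle_2(\nabla\LL_D^{aw},\nabla\LL_f)=90^\circ$. For the other angle, $\la\nabla\LL_D^{aw},\nabla\LL_r\ra_2 = w_r\norm{\nabla\LL_r}_2^2 + w_f\la\nabla\LL_r,\nabla\LL_f\ra_2 = \norm{\nabla\LL_r}_2 - \frac{\la\nabla\LL_r,\nabla\LL_f\ra_2^2}{\norm{\nabla\LL_f}_2^2\,\norm{\nabla\LL_r}_2}\ge 0$ by the Cauchy--Schwarz inequality, hence $\angle_2(\nabla\LL_D^{aw},\nabla\LL_r)\le 90^\circ$. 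Case~3 follows by the same computation with the roles of $r$ and $f$ interchanged.

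The only step that is not purely mechanical is the geometric additivity argument in Case~1 — that a vector in the convex cone of $\nabla\LL_r$ and $\nabla\LL_f$ making equal angles with both must split $\angle_2(\nabla\LL_r,\nabla\LL_f)$ exactly in half — which I would justify by reducing to the $2$-dimensional plane they span. Everything else is substitution plus one use of Cauchy--Schwarz, so I do not expect any real obstacle.
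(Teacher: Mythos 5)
Your proof is correct, and Cases~2 and~3 coincide with the paper's argument essentially line for line: the same inner-product substitutions and the same single application of Cauchy--Schwarz. The one place you diverge is the final step of Case~1. Both you and the paper first show $\cos\left(\angle_2(\nabla \LL_D^{aw}, \nabla \LL_r)\right)=\cos\left(\angle_2(\nabla \LL_D^{aw}, \nabla \LL_f)\right)$ by the identical computation with the unit vectors $\hat u_r,\hat u_f$. To get that the common angle equals $\angle_2(\nabla \LL_r,\nabla \LL_f)/2$, the paper stays computational: it evaluates $\norm{\nabla \LL_D^{aw}}_2^2=2\left(1+\cos\left(\angle_2(\nabla \LL_r,\nabla \LL_f)\right)\right)$ and invokes the half-angle identity $\cos(\theta/2)=\sqrt{(1+\cos\theta)/2}$, which closes the argument in two lines with no geometric input. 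You instead appeal to angle additivity for a vector in the convex cone of $\nabla \LL_r$ and $\nabla \LL_f$ after reducing to the plane they span. That is a valid route, but it is exactly the step you yourself flag as ``not purely mechanical,'' and it is the only part of your proposal that still needs to be written out (one must verify that a nonnegative combination lies in the angular sector between the two vectors and that angles then add). The paper's half-angle computation buys you a fully self-contained proof at the cost of one explicit norm evaluation; if you want the shortest complete write-up, I would swap your geometric lemma for that identity. Both proofs share the same implicit non-degeneracy assumptions ($\nabla \LL_r,\nabla \LL_f\neq 0$ and $\nabla \LL_D^{aw}\neq 0$, the latter failing only when the gradients are anti-parallel in Case~1), which you state explicitly and the paper does not.
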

\begin{proof}
    \hfill
    \begin{enumerate}[leftmargin=*]
        \item If $w_r=\frac{1}{\norm{\nabla \LL_r}_2}$ and $w_f=\frac{1}{\norm{\nabla \LL_f}_2}$, then
        \begin{equation}
            \LL_D^{aw}=\frac{1}{\norm{\nabla \LL_r}_2}\LL_r+\frac{1}{\norm{\nabla \LL_f}_2}\LL_f.
        \end{equation}
        Using the definition of Euclidean inner product,
        \begin{align}
            \cos\left(\angle_2 \left(\nabla \LL_D^{aw}, \nabla \LL_r\right)\right)&=\frac{\la \nabla\LL_D^{aw}, \nabla\LL_r\ra_2}{\norm{\nabla\LL_D^{aw}}_2\norm{\nabla\LL_r}_2}\\
            &=\frac{\frac{1}{\norm{\nabla \LL_r}_2}\la \nabla\LL_r, \nabla\LL_r\ra_2+\frac{1}{\norm{\nabla \LL_f}_2}\la \nabla\LL_f, \nabla\LL_r\ra_2}{\norm{\nabla\LL_D^{aw}}_2\norm{\nabla\LL_r}_2} \\
            &=\frac{1}{\norm{\nabla\LL_D^{aw}}_2}+\frac{\la \nabla\LL_r, \nabla\LL_f\ra_2}{\norm{\nabla\LL_D^{aw}}_2\norm{\nabla\LL_r}_2\norm{\nabla \LL_f}_2} \label{supp_mat:pf1:c1:eq1}
        \end{align}
        \begin{align}
            \cos\left( \angle_2 \left(\nabla \LL_D^{aw}, \nabla \LL_f\right)\right)&=\frac{\la \nabla\LL_D^{aw}, \nabla\LL_f\ra_2}{\norm{\nabla\LL_D^{aw}}_2\norm{\nabla\LL_f}_2}\\
            &=\frac{\frac{1}{\norm{\nabla \LL_r}_2}\la \nabla\LL_r, \nabla\LL_f\ra_2+\frac{1}{\norm{\nabla \LL_f}_2}\la \nabla\LL_f, \nabla\LL_f\ra_2}{\norm{\nabla\LL_D^{aw}}_2\norm{\nabla\LL_f}_2}\\
            &=\frac{1}{\norm{\nabla\LL_D^{aw}}_2}+\frac{\la \nabla\LL_r, \nabla\LL_f\ra_2}{\norm{\nabla\LL_D^{aw}}_2\norm{\nabla\LL_r}_2\norm{\nabla \LL_f}_2}
        \end{align}
        
        We can rewrite $\norm{\nabla\LL_D^{aw}}_2$ in term of $\angle_2 \left(\nabla \LL_r, \nabla \LL_f\right)$, that is
        \begin{align}
            \norm{\nabla\LL_D^{aw}}_2^2&=\la \nabla\LL_D^{aw},\nabla\LL_D^{aw}\ra_2 \nonumber\\
            &=\la \frac{1}{\norm{\nabla \LL_r}_2}\nabla\LL_r+\frac{1}{\norm{\nabla \LL_f}_2}\nabla\LL_f, \frac{1}{\norm{\nabla \LL_r}_2}\nabla\LL_r+\frac{1}{\norm{\nabla \LL_f}_2}\nabla\LL_f\ra_2 \nonumber\\
            &=\frac{\la \nabla \LL_r,\nabla \LL_r\ra_2}{\norm{\nabla \LL_r}_2^2}+\frac{\la \nabla \LL_f,\nabla \LL_f\ra_2}{\norm{\nabla \LL_f}_2^2}+\frac{2\la \nabla \LL_r,\nabla \LL_f\ra_2}{\norm{\nabla \LL_r}_2\norm{\nabla \LL_f}_2}\nonumber\\
            &=2\left(1+\cos\left(\angle_2 \left(\nabla \LL_r, \nabla \LL_f\right)\right) \right)\label{supp_mat:pf1:c1:eq2}.
        \end{align}
        
        Notice that (\ref{supp_mat:pf1:c1:eq1}) can be rewritten using $\angle_2 \left(\nabla \LL_r, \nabla \LL_f\right)$ as 
        \begin{align}
            \cos\left(\angle_2 \left(\nabla \LL_D^{aw}, \nabla \LL_r\right)\right)&=\frac{1}{\norm{\nabla\LL_D^{aw}}_2}+\frac{\la \nabla\LL_r, \nabla\LL_f\ra_2}{\norm{\nabla\LL_D^{aw}}_2\norm{\nabla\LL_r}_2\norm{\nabla \LL_f}_2}\\
            &=\frac{1}{\norm{\nabla\LL_D^{aw}}_2}\left(1+\cos\left(\angle_2 \left(\nabla \LL_r, \nabla \LL_f\right)\right) \right)\\
            &=\sqrt{\frac{1+\cos\left(\angle_2 \left(\nabla \LL_r, \nabla \LL_f\right)\right) }{2}}\\
            &=\cos\left(\angle_2 \left(\nabla \LL_r, \nabla \LL_f\right)/2 \right).
        \end{align}
        
        Thus, $\angle_2 \left(\nabla \LL_D^{aw}, \nabla \LL_r\right)=\angle_2 \left(\nabla \LL_D^{aw}, \nabla \LL_f\right)=\angle_2 \left(\nabla \LL_r, \nabla \LL_f\right)/2$.
        \item If $w_r=\frac{1}{\norm{\nabla \LL_r}_2}$ and $w_f=-\frac{\la\nabla\LL_r,\nabla\LL_f\ra_2}{\norm{\nabla \LL_f}_2^2\norm{\nabla \LL_r}_2}$ then
        \begin{equation}
            \LL_D^{aw}=\frac{1}{\norm{\nabla \LL_r}_2}\LL_r-\frac{\la\nabla\LL_r,\nabla\LL_f\ra_2}{\norm{\nabla \LL_f}_2^2\norm{\nabla \LL_r}_2}\LL_f.
        \end{equation}
        Using this aw-loss function, we have
        \begin{align}
            \la \nabla \LL_D^{aw}, \nabla \LL_f\ra_2 &= \la \frac{1}{\norm{\nabla \LL_r}_2}\nabla \LL_r-\frac{\la\nabla\LL_r,\nabla\LL_f\ra_2}{\norm{\nabla \LL_f}_2^2\norm{\nabla \LL_r}_2}\nabla\LL_f,\nabla\LL_f\ra_2\\
            &=\frac{\la\nabla\LL_r,\nabla\LL_f\ra_2}{\norm{\nabla \LL_r}_2}-\frac{\la\nabla\LL_r,\nabla\LL_f\ra_2\la\nabla\LL_f,\nabla\LL_f\ra_2}{\norm{\nabla \LL_f}_2^2\norm{\nabla \LL_r}_2}\\
            &=\frac{\la\nabla\LL_r,\nabla\LL_f\ra_2}{\norm{\nabla \LL_r}_2}-\frac{\la\nabla\LL_r,\nabla\LL_f\ra_2}{\norm{\nabla \LL_r}_2}=0,
        \end{align}
        and
        \begin{align}
            \la \nabla \LL_D^{aw}, \nabla \LL_r\ra_2 &= \la \frac{1}{\norm{\nabla \LL_r}_2}\nabla \LL_r-\frac{\la\nabla\LL_r,\nabla\LL_f\ra_2}{\norm{\nabla \LL_f}_2^2\norm{\nabla \LL_r}_2}\nabla\LL_f,\nabla\LL_r\ra_2\\
            &= \frac{\norm{\nabla \LL_r}_2^2}{\norm{\nabla \LL_r}_2}-\frac{\la\nabla\LL_r,\nabla\LL_f\ra_2^2}{\norm{\nabla \LL_f}_2^2\norm{\nabla \LL_r}_2}\\
            &\geq\norm{\nabla \LL_r}_2 - \frac{\norm{\nabla \LL_f}_2^2\norm{\nabla \LL_r}_2^2}{\norm{\nabla \LL_f}_2^2\norm{\nabla \LL_r}_2}\\
            &=\norm{\nabla \LL_r}_2-\norm{\nabla \LL_r}_2=0.
        \end{align}

        Thus, $\angle_2\left(\nabla \LL_D^{aw}, \nabla \LL_f\right)=90^{\circ}$ and $\angle_2\left( \nabla \LL_D^{aw}, \nabla \LL_r\right) \leq 90^{\circ}$.
        \item If $w_r=-\frac{\la\nabla\LL_r,\nabla\LL_f\ra_2}{\norm{\nabla \LL_r}_2^2\cdot\norm{\nabla \LL_f}_2}$ and $w_f=\frac{1}{\norm{\nabla\LL_f}_2}$ then
        \begin{equation}
            \LL_D^{aw}=-\frac{\la\nabla\LL_r,\nabla\LL_f\ra_2}{\norm{\nabla \LL_r}_2^2\cdot\norm{\nabla \LL_f}_2}\LL_r+\frac{1}{\norm{\nabla\LL_f}_2}\LL_f,
        \end{equation}
        and similar argument as above proves that $\angle_2\left(\nabla \LL_D^{aw}, \nabla \LL_r\right)=90^{\circ}$ and $\angle_2\left( \nabla \LL_D^{aw}, \nabla \LL_f\right)\leq 90^{\circ}$.
    \end{enumerate}
\end{proof}

\clearpage
\subsection{}\label{Supp_Mater:B}
\subsubsection{Ablation study of $\alpha_1$ and $\alpha_2$ parameters}
\begin{figure}[H]
    \centering
    \includegraphics[width=0.45\textwidth]{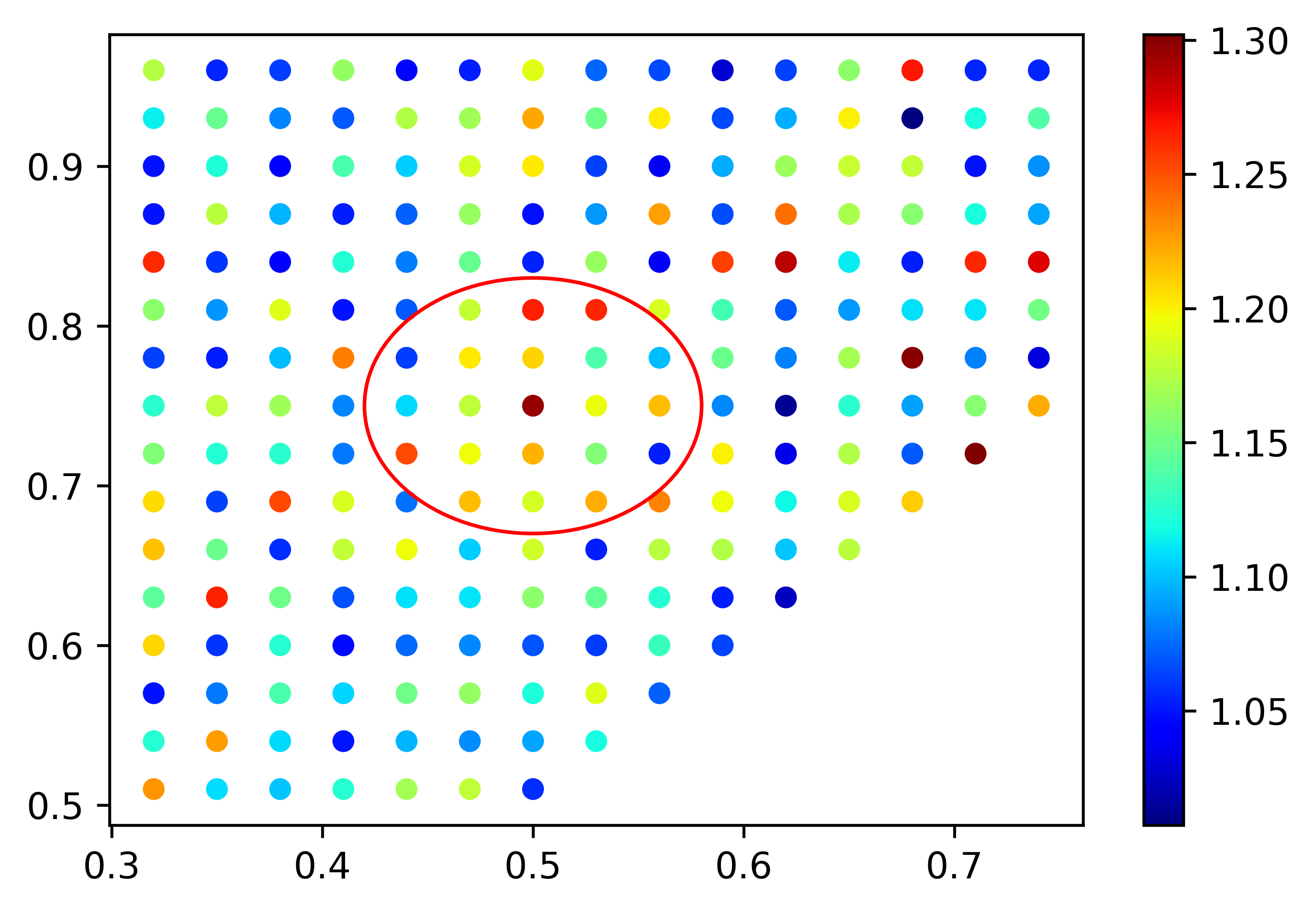}
    \includegraphics[width=0.45\textwidth]{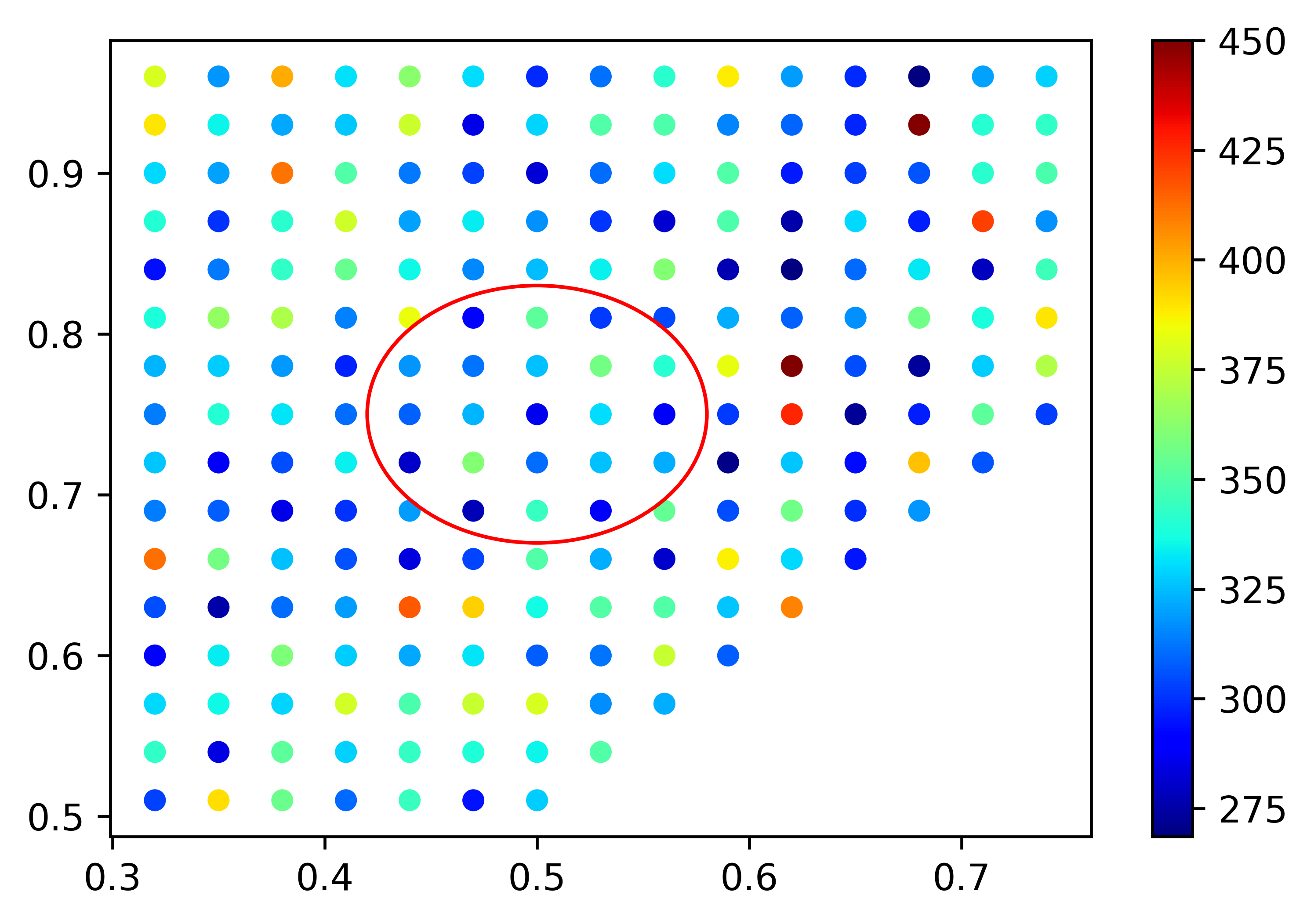}
    \includegraphics[width=0.45\textwidth]{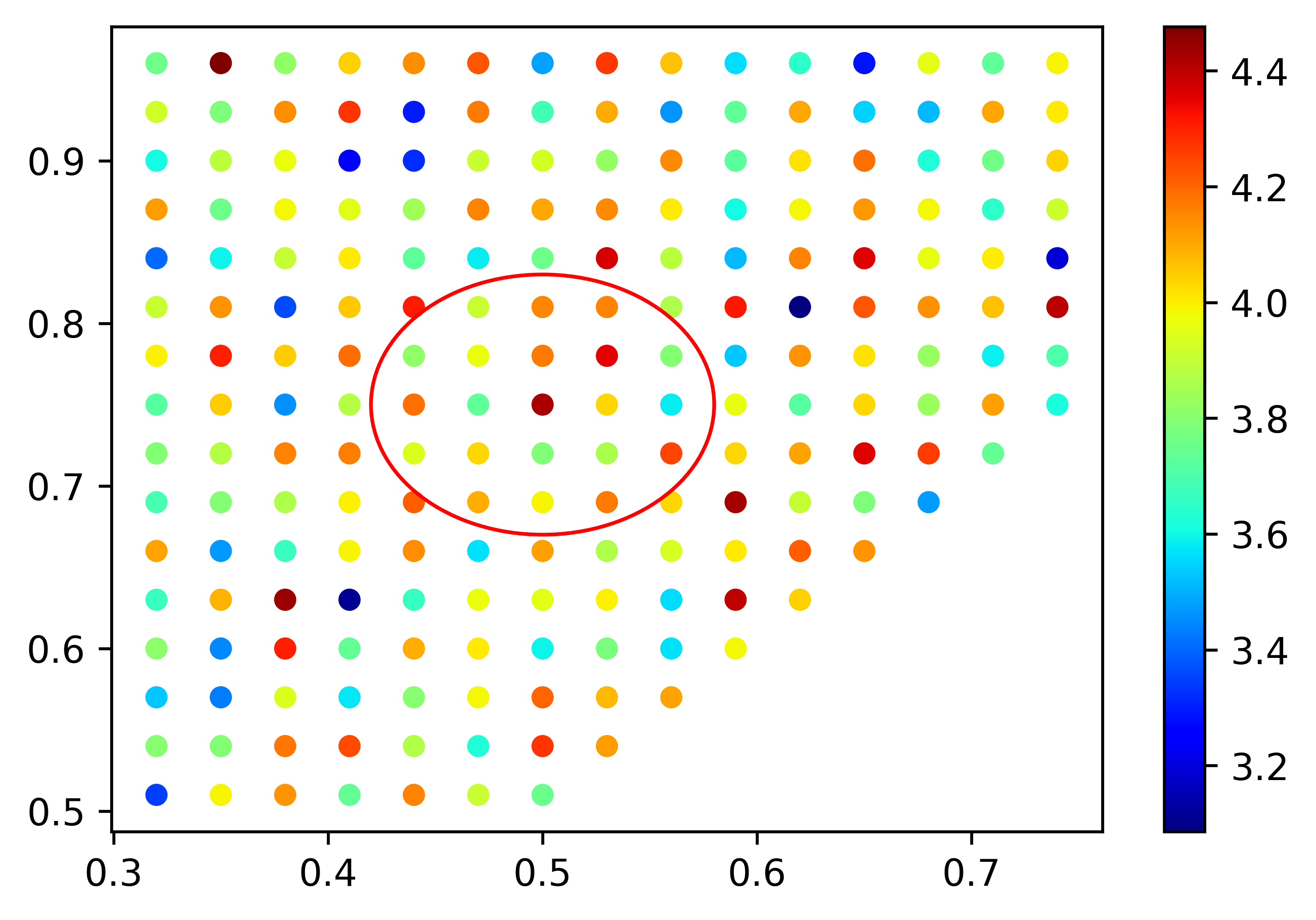}
    \includegraphics[width=0.45\textwidth]{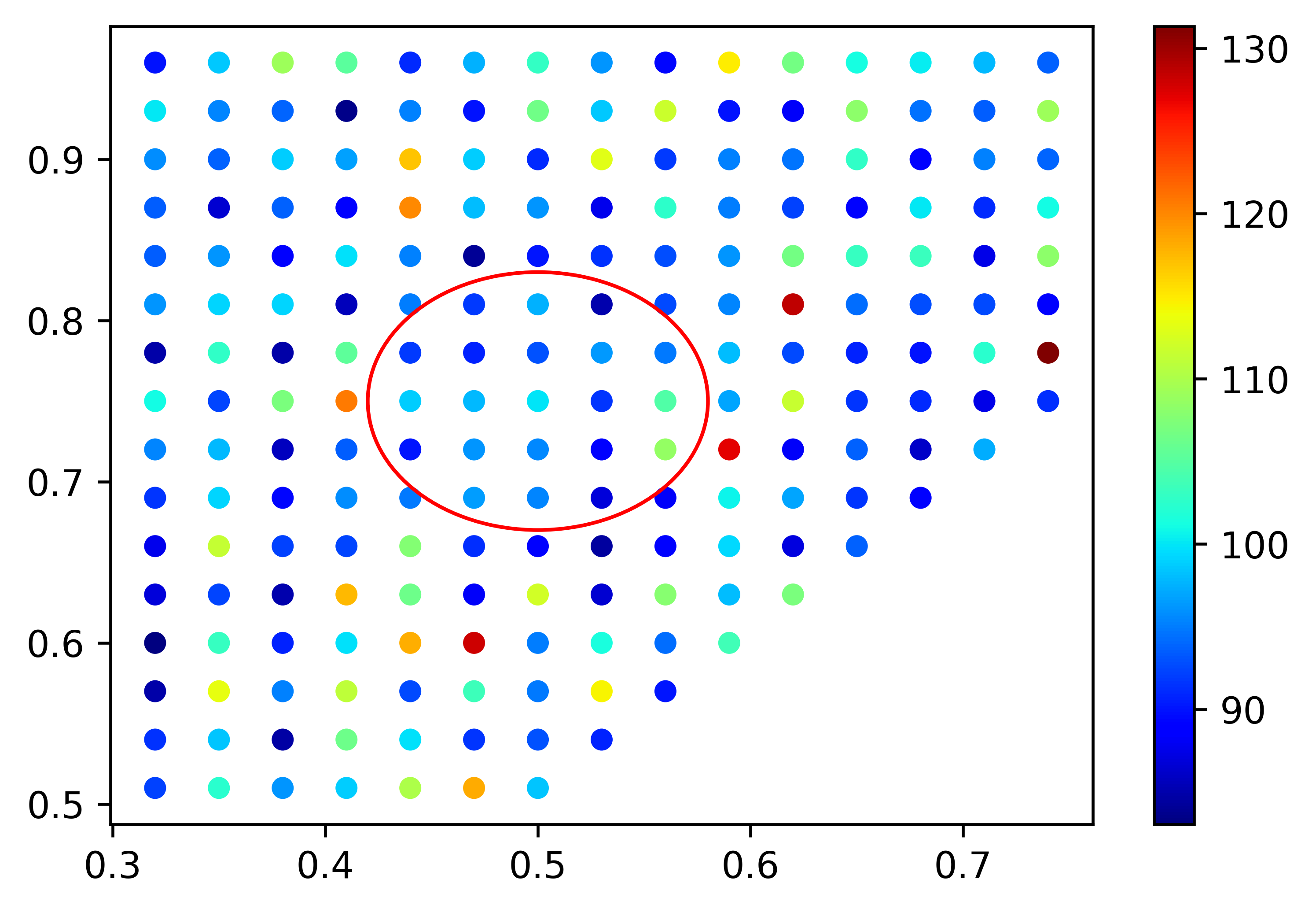}
    \caption{Top-left: IS grid after the first epoch; Bottom-left: IS grid after the fifth epoch; Top-right: FID grid after the first epoch; Bottom-right: FID grid after the fifth epoch.}
    \label{fig:grid}
\end{figure}

We have considered the choice of $\alpha_1$ and $\alpha_2$ by experimenting with aw-AutoGAN models on the CIFAR-10 dataset. We recorded IS and FID scores after the first and the fifth epochs where the models were trained with the parameters in the grids shown in Figure \ref{fig:grid} with all other settings fixed. We present the IS and FID scores in heat map plots in Figure \ref{fig:grid}. The results  show that neighbors around the point $(0.5, 0.75)$ lead to good scores; the point $(0.5, 0.75)$ itself produces one of the highest IS and one of the lowest FID. In particular, the performance is not too sensitive to the selections.

\subsubsection{Dataset and Implementation Details}
We test our aw-method on the following datasets:
\begin{itemize}[leftmargin=*]
    \item The \emph{CIFAR-10} dataset~\cite{Krizhevsky09learningmultiple}  consists of 60,000 color images with 50,000 for training and 10,000 for testing. All images have resolution $32\times32$ pixels and are divided equally into 10 classes, with 6,000 images per class. No data augmentation;
    \item The \emph{STL-10} is a dataset proposed in~\cite{pmlr-v15-coates11a} and designed for image recognition and unsupervised learning. STL-10 consists of 100,000 unlabeled images with $96 \times 96$ pixels and split into 10 classes. All images are resized to $48 \times 48$ pixels, without any other data augmentation;
    \item The \emph{CIFAR-100} from~\cite{Krizhevsky09learningmultiple} is a dataset similar to CIFAR-10 that consists of 60,000 color $32 \times 32$ pixel images that are divided into 100 classes. No data augmentation.
\end{itemize}

We follow the original implementations of SN-GAN, AutoGAN and BigGAN-PyTorch~\cite{BG_github} that use the following hyperparameters:
\begin{itemize}[leftmargin=*]
    \item \emph{Generator: }learning rate: 0.0002; batch size: 128 (SN-GAN, AutoGAN) and 50 (BigGAN); optimizer: Adam optimizer with $\beta_1=0$ and $\beta_2=0.999$~\cite{Adam_opt};  loss: hinge~\cite{lim2017geometric, tran2017hierarchical}; spectral normalization: False; learning rate decay: linear; \# of training epochs: 320 (SN-GAN), 300 (AutoGAN) and 1000 (BigGAN);
    \item \emph{Discriminator: }learning rate: 0.0002; batch size: 64 (SN-GAN, AutoGAN) and 50 (BigGAN); optimizer: Adam optimizer with $\beta_1=0$ and $\beta_2=0.999$; loss: hinge; spectral normalization: True; learning rate decay: linear; training iterations ratio: 3 (SN-GAN) and 2 (AutoGAN, BigGAN).
\end{itemize}

Experiments based on SN-GAN and AutoGAN models are performed on a single NVIDIA$^{\circledR}$ QUADRO$^{\circledR}$ P5000 GPU running Python 3.6.9 with PyTorch v1.1.0 for AutoGAN based models and Chainner v4.5.0 for SN-GAN based models. Experiments based on BigGAN-Pytorch~\cite{BG_github} model are performed on two NVIDIA$^{\circledR}$ Tesla$^{\circledR}$ V100 GPU running Python 3.6.12 with PyTorch v1.4.0.

\subsubsection{Aw-method with non-normalized gradients}

In section \ref{sect:awGAN_section}, we introduced Algorithm \ref{alg:1} which was developed using Theorem \ref{thm:normalized} where the weights $w_r$ and $w_f$ include normalization of the gradients $\nabla \LL_r$ and $\nabla\LL_f$. This is not necessary and we can consider using  non-normalized gradients directly where one of the weights is chosen as 1. The corresponding results are stated as the following theorem. 

\begin{thm}\label{thm:not_normalized}
    Consider $\LL_D^{aw}$ in (\ref{intro:eq:02}) and the gradient $\nabla \LL_D^{aw}$.
    \begin{enumerate}[leftmargin=*]
        \item If $w_r=1$ and $w_f=-\frac{\la\nabla\LL_r,\nabla\LL_f\ra_2}{\norm{\nabla \LL_f}_2^2}$, then
        \begin{equation}
            \angle_2\left(\nabla \LL_D^{aw}, \nabla \LL_f\right)=90^{\circ},\; \angle_2\left( \nabla \LL_D^{aw}, \nabla \LL_r\right) \leq 90^{\circ}.
        \end{equation}
        \item If $w_r=-\frac{\la\nabla\LL_r,\nabla\LL_f\ra_2}{\norm{\nabla \LL_r}_2^2}$ and $w_f=1$, then
        \begin{equation}
            \angle_2\left( \LL_D^{aw}, \nabla \LL_r\right)=90^{\circ},\; \angle_2\left( \nabla \LL_D^{aw}, \nabla \LL_f\right) \leq 90^{\circ}.
        \end{equation}
    \end{enumerate}
\end{thm}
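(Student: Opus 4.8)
The plan is to prove Theorem~\ref{thm:not_normalized} by a direct computation of the relevant Euclidean inner products, exactly mirroring the argument used for cases~2 and~3 of Theorem~\ref{thm:normalized}, but with the normalization factor $1/\norm{\nabla\LL_r}_2$ (resp.\ $1/\norm{\nabla\LL_f}_2$) absorbed into the overall scaling of $\nabla\LL_D^{aw}$. The point is that scaling a vector by a positive constant does not change any angle it makes with another vector, so the weights here differ from those in Theorem~\ref{thm:normalized} only by such a positive rescaling of $\LL_D^{aw}$, and hence the angle conclusions are identical. Thus in principle the statement follows immediately from Theorem~\ref{thm:normalized}, but it is cleaner to give the one-line inner-product computations.

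For part~1, with $w_r=1$ and $w_f=-\la\nabla\LL_r,\nabla\LL_f\ra_2/\norm{\nabla\LL_f}_2^2$ we have $\nabla\LL_D^{aw}=\nabla\LL_r-\frac{\la\nabla\LL_r,\nabla\LL_f\ra_2}{\norm{\nabla\LL_f}_2^2}\nabla\LL_f$. First I would compute
\begin{align*}
    \la\nabla\LL_D^{aw},\nabla\LL_f\ra_2
    &=\la\nabla\LL_r,\nabla\LL_f\ra_2-\frac{\la\nabla\LL_r,\nabla\LL_f\ra_2}{\norm{\nabla\LL_f}_2^2}\norm{\nabla\LL_f}_2^2=0,
\end{align*}
which gives $\angle_2(\nabla\LL_D^{aw},\nabla\LL_f)=90^{\circ}$. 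Then I would compute
\begin{align*}
    \la\nabla\LL_D^{aw},\nabla\LL_r\ra_2
    &=\norm{\nabla\LL_r}_2^2-\frac{\la\nabla\LL_r,\nabla\LL_f\ra_2^2}{\norm{\nabla\LL_f}_2^2}\ge \norm{\nabla\LL_r}_2^2-\norm{\nabla\LL_r}_2^2=0,
\end{align*}
where the inequality is Cauchy--Schwarz, $\la\nabla\LL_r,\nabla\LL_f\ra_2^2\le\norm{\nabla\LL_r}_2^2\norm{\nabla\LL_f}_2^2$. A nonnegative inner product means $\cos(\angle_2(\nabla\LL_D^{aw},\nabla\LL_r))\ge 0$, hence $\angle_2(\nabla\LL_D^{aw},\nabla\LL_r)\le 90^{\circ}$. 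Part~2 is symmetric: swap the roles of $\nabla\LL_r$ and $\nabla\LL_f$, i.e.\ with $w_f=1$ and $w_r=-\la\nabla\LL_r,\nabla\LL_f\ra_2/\norm{\nabla\LL_r}_2^2$ one finds $\la\nabla\LL_D^{aw},\nabla\LL_r\ra_2=0$ and $\la\nabla\LL_D^{aw},\nabla\LL_f\ra_2\ge 0$ by the same two steps.

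There is essentially no obstacle here; the only mild subtlety is the implicit nondegeneracy assumption that $\nabla\LL_r\neq 0$ and $\nabla\LL_f\neq 0$ (otherwise the weights and the angles are undefined), which I would note in passing, and the observation that $\nabla\LL_D^{aw}\neq 0$ in the generic case so the angle is well defined — when $\nabla\LL_r$ and $\nabla\LL_f$ are parallel with opposite orientation one gets $\nabla\LL_D^{aw}=0$, an edge case handled in practice by the additive $\varepsilon$ in Algorithm~\ref{alg:1}. The computation itself is just one application of bilinearity of $\la\cdot,\cdot\ra_2$ together with Cauchy--Schwarz, so the whole proof is three or four displayed lines.
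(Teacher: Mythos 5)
Your proposal is correct and follows essentially the same route as the paper, which simply declares the proof ``identical to the proof of Theorem~\ref{thm:normalized}'': the inner-product computations you write out are exactly those of cases~2 and~3 of that theorem with the factor $1/\norm{\nabla\LL_r}_2$ (resp.\ $1/\norm{\nabla\LL_f}_2$) removed, a positive rescaling that leaves all angles unchanged. Your added remark on the nondegeneracy of the gradients and of $\nabla\LL_D^{aw}$ is a reasonable bonus but not a departure from the paper's argument.
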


\begin{proof}
    Identical to the proof of Theorem \ref{thm:normalized}.
\end{proof}

Similar to section \ref{sect:awGAN_section}, we have developed Algorithm \ref{alg:2} using Theorem \ref{thm:not_normalized}. The key difference between Algorithms \ref{alg:1} and \ref{alg:2} is normalization of the gradients; the rest of the algorithm is unchanged including the values for $\alpha_1=0.5$, $\alpha_2=0.75$, $\varepsilon=0.05$ and $\delta=0.05$. 

\IncMargin{0.15em}\SetNlSty{text}{}{:}
\begin{algorithm}
    \SetAlgoLined
        \textbf{Given:} $\PP_\data$ and $\PP_\model$ - data and model distributions\;
        \textbf{Given:} $\alpha_1=0.5$, $\alpha_2=0.75$, $\varepsilon=0.05$, $\delta=0.05$\;
        \textbf{Sample:} $x_1, \ldots, x_n \sim \PP_\data$ and $y_1, \ldots, y_n \sim \PP_\model$\;
        \textbf{Compute:} $\nabla\LL_r$, $\nabla\LL_f$, $s_r=\frac{1}{n}\sum_{i=1}^n\sigma(D(x_i))$, $s_f=\frac{1}{n}\sum_{j=1}^n\sigma(D(y_j))$\;
        \uIf{$s_r < s_f-\delta$ or $s_r < \alpha_1$}{
            \uIf{$\angle_2\left(\nabla\LL_r,\nabla\LL_f\right) > 90^{\circ}$}{
            $w_r=1+\varepsilon$; $w_f=-\frac{\la\nabla\LL_r,\nabla\LL_f\ra_2}{\norm{\nabla \LL_f}_2^2}+\varepsilon$\;
            }
            \Else{
            $w_r=1+\varepsilon$; $w_f=\varepsilon$\;
            }
        }
        \uElseIf{$s_r > s_f-\delta$ and $s_r > \alpha_2$}{
            \uIf{$\angle_2\left(\nabla\LL_r,\nabla\LL_f\right) > 90^{\circ}$}{
            $w_r=-\frac{\la\nabla\LL_r,\nabla\LL_f\ra_2}{\norm{\nabla \LL_r}_2^2}+\varepsilon$;
            $w_f=1+\varepsilon$\;
            }
            \Else{
            $w_r=\varepsilon$; $w_f=1+\varepsilon$\;
            }
        }
        \Else{
        $w_r=1+\varepsilon$;  $w_f=1+\varepsilon$\;
        }
    \caption{Adaptive weighted discriminator method w/o normalization for one step of discriminator training.}
    \label{alg:2}
\end{algorithm}

Similar to Section \ref{sect:experiments}, we tested Algorithm \ref{alg:2} on unconditional and conditional image generating tasks, with results provided in Tables \ref{table:not_normalized} and \ref{table:conditional_gan_ext}, respectively. The implementation details are the same as in Section \ref{sect:experiments}.  

For the unconditional GAN, we tested our non-normalized aw-method on SN-GAN and AutoGAN baselines on three datasets: CIFAR-10, STL-10, and CIFAR-100. Our non-normalized methods also significantly improve SN-GAN baseline, in particular achieving the highest IS for the STL-10 dataset among comparisons in Table \ref{stl10_results}, and aw-AutoGAN non-normalized models improve the baseline AutoGAN models as well on all the datasets.

For the conditional GAN, we tested our non-normalized aw-method on SN-GAN and BigGAN models as baselines on CIFAR-10 and CIFAR-100 datasets. Both non-normalized aw-SN-GAN and aw-BigGAN significantly improve baseline models for both datasets.

On average, normalized weights achieve better results than non-normalized ones. We advocate the normalized version (Algorithm \ref{alg:1}), but both produce quite competitive results and should be considered in implementations. 

\begin{table}[H]
    \begin{center}
        \begin{tabular}{l||c|c||c|c||c|c}
            & \multicolumn{2}{c||}{CIFAR-10} & \multicolumn{2}{c||}{STL-10} & \multicolumn{2}{c}{CIFAR-100}\\
            \hline
            \rowcolor{lightgray}Method & IS $\uparrow$ & FID $\downarrow$ & IS $\uparrow$ & FID $\downarrow$ & IS $\uparrow$ & FID $\downarrow$ \\
            \hline
            SN-GAN~\cite{miyato2018spectral} & 8.22{\scriptsize$\pm$.05} & 21.7 & 9.10{\scriptsize$\pm$.04} & 40.10 & 8.18{\scriptsize$\pm$.12}$^{*}$ & 22.40$^{*}$ \\
            \hline
            aw-SN-GAN (Ours) &  8.53{\scriptsize$\pm$.11} & 12.32 & 9.53{\scriptsize$\pm$.10} & 36.41 & 8.31{\scriptsize$\pm$.02} & 19.08 \\
            \hline
            aw-SN-GAN (non-norm.; Ours) & 8.43{\scriptsize$\pm$.07} & 12.65 & 9.61{\scriptsize$\pm$.12} & 34.72 & 8.30{\scriptsize$\pm$.11} & 19.48 \\
            \hline\hline
            AutoGAN~\cite{Gong_2019_ICCV} & 8.55{\scriptsize$\pm$.10} & 12.42 & 9.16{\scriptsize$\pm$.12} & 31.01 & 8.54{\scriptsize$\pm$.10}$^{*}$ & 19.98$^{*}$ \\
            \hline
            aw-AutoGAN (Ours) &  9.01{\scriptsize$\pm$.03} & 11.82  &  9.41{\scriptsize$\pm$.09} & 26.32 & 8.90{\scriptsize$\pm$.06} & 19.00 \\ 
            \hline
            aw-AutoGAN (non-norm.; Ours) & 8.98{\scriptsize$\pm$.06} & 13.17 & 9.59{\scriptsize$\pm$.14} & 27.97 & 8.72{\scriptsize$\pm$.05} & 19.89 
        \end{tabular}
    \end{center}
    \caption{Unconditional GAN: CIFAR-10, STL-10, and CIFAR-100 scores for the normalized (Algorithm \ref{alg:1}) and non-normalized (Algorithm \ref{alg:2}) versions of aw-method; $^{*}$ - results from our test.}
    \label{table:not_normalized}
\end{table}

\begin{table}[H]
    \begin{center}
        \begin{tabular}{l||c|c||c|c}
            & \multicolumn{2}{c||}{CIFAR-10} & \multicolumn{2}{c}{CIFAR-100}\\
            \hline
            \rowcolor{lightgray}Method & IS $\uparrow$ & FID $\downarrow$ & IS $\uparrow$ & FID $\downarrow$\\
            \hline
            SN-GAN (cond.)~\cite{miyato2018spectral} & 8.60{\scriptsize$\pm$.08} & 17.5 & 9.30$^{\dag}$ & 15.6$^{\dag}$ \\
            \hline
            aw-SN-GAN (cond.; Ours) & 9.03{\scriptsize$\pm$.11} & 8.11 & 9.48{\scriptsize$\pm$.13} & 14.42 \\
            \hline
            aw-SN-GAN (non-norm.; cond.; Ours) & 9.00{\scriptsize$\pm$.12} & 8.03 & 9.44{\scriptsize$\pm$.16} & 14.00\\
            \hline \hline
            BigGAN~\cite{brock2018large} & 9.22 & 14.73 & 10.99{\scriptsize$\pm$.14}$^{*}$ & 11.73$^{*}$ \\
            \hline
            aw-BigGAN (cond.; Ours) & 9.52{\scriptsize$\pm$.10} & 7.03 & 11.22{\scriptsize$\pm$.17} & 10.23 \\
            \hline
            aw-BigGAN (non-norm.; cond.; Ours) & 9.50{\scriptsize$\pm$.07} & 6.89 & 11.26{\scriptsize$\pm$.20} & 10.25 \\
        \end{tabular}
    \end{center}
    \caption{Conditional GAN: CIFAR-10 and CIFAR-100 scores for the normalized (Algorithm \ref{alg:1}) and non-normalized (Algorithm \ref{alg:2}) versions of aw-method; $^{\dag}$ - quoted from~\cite{shmelkov2018good}; $^{*}$ - results from our tests based on~\cite{BG_github}.}
    \label{table:conditional_gan_ext}
\end{table}
\end{document}